\newif\ifpaper
\newtheorem{definition}{Definition}
\newtheorem{theorem}{Theorem}
\newtheorem{lemma}{Lemma}
\DeclareMathOperator*{\argmin}{arg\,min}
\newcommand{\calM}{\mathcal{M}}
\newcommand{\bbR}{\mathbb{R}}
\newcommand{\bbN}{\mathbb{N}}
\newcommand{\relu}{\textnormal{ReLU}}
\newcommand{\textwhere}{\textnormal{\ where\ }}
\newcommand{\textand}{\textnormal{\ and\ }}
\newcommand{\vol}{\textnormal{vol}}
\newcommand{\err}{\textnormal{err}}
\newcommand{\dif}{\textnormal{d}}
\newenvironment{hproof}{%
  \proof}{\endproof}
\author{%
 Kyle Matoba \url{kyle.matoba@epfl.ch}\\
 Idiap Research Institute and EPFL
 \and
 Nikolaos Dimitriadis \url{nikolaos.dimitriadis@epfl.ch}\\
 EPFL
 \and
 Fran\c{c}ois Fleuret \url{francois.fleuret@unige.ch}\\
 University of Geneva%
}
\title{The Theoretical Expressiveness of Maxpooling}
\begin{document}

\maketitle

\begin{abstract}
Over the decade since deep neural networks became state of the art image classifiers there has been a tendency towards less use of max pooling: the function that takes the largest of nearby pixels in an image. Since max pooling featured prominently in earlier generations of image classifiers, we wish to understand this trend, and whether it is justified. We develop a theoretical framework analyzing ReLU based approximations to max pooling, and prove a sense in which max pooling cannot be efficiently replicated using ReLU activations. We analyze the error of a class of optimal approximations, and find that whilst the error can be made exponentially small in the kernel size, doing so requires an exponentially complex approximation.

Our work gives a theoretical basis for understanding the trend away from max pooling in newer architectures. We conclude that the main cause of a difference between max pooling and an optimal approximation, a prevalent large difference between the max and other values within pools, can be overcome with other architectural decisions, or is not prevalent in natural images.

\end{abstract}


\section{Introduction}
When convolutional neural networks first became state of the art image classifiers, max pooling was a fundamental aspect of modelling. Some recent studies have argued that max pooling operations are unnecessary -- for instance because strided convolutions composed with ReLU nonlinearity can achieve the same outcome more simply and flexibly (\cite{Springenberg2014}). 
And practice has largely followed this observation -- whereas VGG (\cite{Simonyan_Zisserman_2015}) and AlexNet (\cite{Krizhevsky_Sutskever_Hinton_2017}) had several max pooling layers, many modern ResNets (\cite{He2016}) feature only a single max pooling layer, and some other important image classifies -- such as InceptionV3 (\cite{Szegedy_Vanhoucke_Ioffe_Shlens_Wojna_2016}) and mobilenetV3 (\cite{Howard_Pang_Adam_Le_Sandler_Chen_Wang_Chen_Tan_Chu_et_2019}) -- have none at all, despite being much deeper overall.\footnote{All statements about historical models refer to their reference implementation in torchvision, described here: \url{https://pytorch.org/vision/stable/models.html}.} And it would be lovely if max pooling could be dropped from the toolbox of convolutional neural network-based image classifiers, because it would ease the practical details of designing architectures, and better focus theoretical and development efforts. 

In this paper, we examine whether this is possible. We show that for some inputs, max pooling can give very different outputs than an optimal approximation by ReLUs. We derive comprehensive bounds on the error one realises by trying to approximate max functions with the composition of ReLU and linear operations, and find that a simple divide and conquer algorithm that progressively computes pairwise maxes (which requires $\log_2(d)$ applications for the maximum of $d$ inputs) cannot be improved upon, but that quite accurate approximations can be built, though they require $O(2^d)$ computation. We show that a natural approximation of modest complexity has surprisingly high error, and we present a convex optimization problem that characterizes the error from all intermediate approximations. 

This result does not say that it is wrong to omit max pooling from newer image classifiers. Rather, it says when it could be wrong. Even if max pooling is not empirically necessary to achieve SOTA accuracy, understanding more precisely what inductive biases one imposes by its omission is useful. Our results show that, max pooling is more efficient than an equivalent ReLU + linear scheme for inputs that have very high ranges of values within pools, so if we see that on natural images, max pooling is not needed, then we learn that natural images tend not to be highly dispersed. 
Furthermore, whilst the two operations might be of comparable accuracy, one may perform more reasonably when confronted with unnatural images, and we conduct a short experiment that preliminarily shows that networks built with max-pooling are more adversarially robust than an approximation by ReLUs.

\subsection{Notation}


$A^\top$ indicates the transpose of a matrix $A$, and $A^\dagger$ its Moore-Penrose pseudoinverse. 
$I_d$ denotes the identity matrix in $\bbR^{d \times d}$ and $1_d$ denotes a column vector of ones in $\bbR^d$. $e_{dj} \in \bbR^d$ denotes the $j$th column of $I_d$. $\Delta_d \subset \bbR^{d+1}$ denotes the standard $d$-simplex. We use the \emph{order statistics} notation in which $x_{(i)}$ (the subscripts being enclosed in parentheses) denotes the $i$th largest element of a vector $x = (x_1, x_2, \hdots, x_d)$. $\chi_A$ denotes the indicator function of the set $A$: $\chi_A(x) = 1$ if $x \in A$, and zero otherwise. 

This paper amounts to a careful analysis of the maximum function in $d$ dimensions, but we will also need to use the standard notion of the largest vector. We use the upper-case $\textsc{max}$ function to denote the function $\bbR^d \rightarrow \bbR$ and the notation $\max_i x_i$ to be the largest value of the vector $x$. We use the phrase ``order'' to indicate the size of the argument to a function.

We use the phrase ``ReLU network'' to mean a deep neural network comprised of alternating linear and ReLU layers, and ``ReLU block'' to mean a ReLU network that is part of a larger network. Here a linear layer is used expansively to include batch normalization, convolution, average pooling, etc. and compositions thereof.

\section{The complexity of max pooling operations}
\label{sec:width_effect}

In this section, we prove that in a simplified model, max pooling \emph{requires} depth -- multiple layers of ReLU nonlinearity are necessary in order to effect the same computation, and more layers are needed for larger windows. 

Max pooling is generally motivated as a way of aggregating over values in order to summarize and reduce dimensionality. However, this interpretation can be totally separated from its mathematical analysis, and doing so is more general and also simpler. Thus, throughout this paper, we examine the problem of approximating a max function which puts aside much of the unnecessary complexity around ``pooling'' specific considerations like striding, padding, and dilation, which ultimately amount to extracting the arguments to a max operator. In this sense, our model of max pooling can be seen as a simplifed version of the fittable \emph{maxout} activation proposed by \cite{Goodfellow2013} that fixes the linear mapping applied prior to the application of a maximum function to be the identity. 


\subsection{The use of max pooling in deep learning}

A standard measure of the complexity of a neural network with piecewise linear nonlinearities is how many distinct linear regions it divides the domain into (\cite{Hanin2019, Arora2018}). A linear aggregation followed by a ReLU clearly takes on two distinct linear functions, whilst the max of $d$ variables can, in general, behave differently on each of the $O(2^d)$ subsets where each of its arguments is equal to the maximum. Nonetheless, \autoref{theorem:equivalence} demonstrates that evaluating the heaviside function of a max function using a small, one-layer ReLU network is possible.

\begin{theorem}
\label{theorem:equivalence}
There exists a ReLU network with $d$ inputs, $d$ hidden neurons, and one output, $f$ such that for all $\xi \in \bbR$, $f(x - \xi) \le 0 \iff \max \{x_1, \hdots, x_d\} \le \xi$.
\end{theorem}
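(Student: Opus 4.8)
The plan is to prove existence by exhibiting the network explicitly, and the governing observation is that the biconditional constrains only the zero sublevel set of $f$, not the value of $f$ anywhere else. In particular I do not need $f$ to reproduce or even approximate the max function; I only need $f$ to be nonpositive exactly when every coordinate of its argument is nonpositive. First I would set $y = x - \xi 1_d$, so that $\relu$ thresholds shift uniformly and the target condition $\max\{x_1, \hdots, x_d\} \le \xi$ becomes $\max_i y_i \le 0$, equivalently $y_i \le 0$ for every $i$. This reduces the statement to producing a single fixed network $f$, independent of $\xi$, satisfying $f(y) \le 0 \iff \max_i y_i \le 0$, with $\xi$ entering only through the argument.

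The candidate I would take is $f(y) = 1_d^\top \relu(y) = \sum_{i=1}^d \relu(y_i)$, which is exactly a ReLU network whose first linear map is the identity $I_d$ feeding $d$ hidden ReLU units and whose output weights are the all-ones vector $1_d$: thus $d$ inputs, $d$ hidden neurons, and one output, matching the stated budget. The verification rests entirely on the nonnegativity of $\relu$. If every $y_i \le 0$ then every summand $\relu(y_i) = 0$, so $f(y) = 0 \le 0$; conversely, if some $y_j > 0$ then $\relu(y_j) > 0$ and, all remaining summands being nonnegative, $f(y) > 0$. Hence $f(y) \le 0$ holds precisely when $\max_i y_i \le 0$, and substituting $y = x - \xi 1_d$ recovers the claimed equivalence for all $\xi \in \bbR$.

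I do not expect a genuine obstacle here; the only step requiring care is recognizing that a sum of ReLUs functions as a soft indicator whose $0$-sublevel set coincides exactly with the orthant $\{y : \max_i y_i \le 0\}$, so that the coarse thresholding behavior of $\textsc{max}$ is captured even though its fine piecewise structure is not. The one subtlety worth stating cleanly is that $f$ must be a single network not depending on $\xi$, which is automatic since the translation $y = x - \xi 1_d$ shifts all $d$ hidden thresholds by the same amount.
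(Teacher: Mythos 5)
Your proposal is correct and is essentially the paper's own argument: both take $f(y) = \sum_{k=1}^d \relu(y_k)$ and observe that this sum of nonnegative terms is nonpositive exactly when every coordinate is nonpositive, which after the shift $y = x - \xi 1_d$ gives the claimed equivalence. Your write-up simply makes explicit the identity first layer, the all-ones output weights, and the independence of the network from $\xi$, all of which the paper leaves implicit.
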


\begin{proof}
\begin{align*}
\max \{x_1, \hdots, x_d\} \le \xi \iff x_1 \le \xi \textand \hdots \textand x_d \le \xi \iff \sum_{k=1}^d \relu(x_k - \xi) \le 0.
\end{align*}
\end{proof}

This means that, as it pertains to classification accuracy in a network with a single source of nonlinearity, max pooling \emph{can} be replaced with a ReLU layer. However, since max pooling is typically used to construct hidden layers values (``C-cells'' in the terminology of early work drawing direct analogies to the visual cortex, such as \cite{Fukushima1980}), and not the direct computation of final logits, it does not imply that max pooling can be replaced with ReLUs throughout a deep neural network. 

\cite{Telgarsky2016} showed that deep neural networks cannot be concisely simulated by shallow networks. They did this by demonstrating a classification problem that is easy for deep networks to solve, but is provably difficult for shallow networks. We seek to do similarly -- building a test problem on which max pooling succeeds and ReLU fails, however \autoref{theorem:equivalence} shows that classification accuracy alone is not the correct notion of approximation. Rather showing that it is difficult for ReLU blocks to compute the maximum function with a bound on the $L_\infty$ error. Since we are interested in the indispensability of max pooling to build intermediate features, this is the most reasonable metric. 


\subsection{Computing max using ReLU}
In two dimensions, $\max(a, b) = (\relu(a - b) + \relu(b - a) + a + b) /2$, however this simple corresponence breaks down for $d > 2$. \autoref{sec:quintic_and_above} gives an interesting heuristic argument for the same fact using an approach wholly separate to ours. \autoref{thm:min_approx_error} is a scaled-up formulation of this fact, showing how a deep neural network can be used to recursively form pairwise maxes in order to compute the maximum of many variables in the obvious way. %

\begin{lemma}
\label{thm:min_approx_error}
There exists a $\lceil \log_2(d) \rceil$-hidden layer ReLU block with $k$th hidden layer size $2^{\lfloor(\log_2(d - 1) + 1) \rfloor} / 2^{k - 1}$ that evaluates $\textsc{max}: \bbR^d \rightarrow \bbR$.
\end{lemma}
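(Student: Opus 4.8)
The plan is to build the network by divide-and-conquer, recursively folding the $d$ inputs into pairwise maxima until a single value remains, using the two-variable identity
\[
\max(a,b) = \tfrac{1}{2}\left(\relu(a-b) + \relu(b-a) + a + b\right)
\]
recalled just above as the primitive computed at every internal node. First I would reduce to the case where the number of inputs is a power of two: since the maximum of a multiset is unchanged by repeating any of its entries, I can pad $x \in \bbR^d$ to length $D := 2^{\lceil \log_2 d \rceil}$ by duplicating, say, $x_d$, and this padding is a linear map. It then remains to compute $\textsc{max}$ of $D$ inputs with a balanced binary tree of depth $\lceil \log_2 d \rceil = \lceil \log_2 D\rceil$.

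For the counting, I would index the tree by depth $k = 1, \ldots, \lceil \log_2 d\rceil$. At depth $k$ there are $D/2^{k}$ partial maxima to form, and by the identity each one is produced from its two children by the two ReLU units $\relu(a-b)$ and $\relu(b-a)$ together with a purely linear correction. Hence hidden layer $k$ contains exactly $2 \cdot D/2^{k} = D/2^{k-1}$ ReLU neurons, the tree has $\lceil \log_2 d\rceil$ hidden layers, and the last layer ($k = \lceil \log_2 d\rceil$) contains two neurons that form the final max. To match the stated layer size I would then verify the arithmetic identity $2^{\lfloor \log_2(d-1) + 1\rfloor} = 2^{\lceil \log_2 d\rceil} = D$ (checking the two cases according to whether or not $d$ is a power of two), so that the depth-$k$ width $D/2^{k-1}$ coincides with the advertised $2^{\lfloor \log_2(d-1)+1\rfloor}/2^{k-1}$.

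The step I expect to be the main obstacle is realising this tree as a genuine layered block with \emph{exactly} these widths, because the term $a+b$ in the identity is not available from the preceding ReLU layer alone: the depth-$k$ activations $\relu(\pm(a-b))$ depend only on differences of their children, so the sum $a+b$ (equivalently, the actual value of each partial maximum) cannot be recovered from them. Under a strict feedforward layering one would therefore have to spend extra neurons carrying these affine terms, which would overshoot the counts. I would resolve this by exploiting the paper's expansive notion of a linear layer (and skip connections in the containing block): I claim, by induction on $k$, that every partial maximum at depth $k$ is an \emph{affine} function of the input together with all earlier ReLU activations, so that the depth-$(k{+}1)$ pre-activations $\pm(a-b)$, and ultimately the output, are affine in the available signals and can be formed by the (expansive) linear maps without any additional ReLU neurons. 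The base case is the two-variable identity itself, and the inductive step follows at once from it, so the $D/2^{k-1}$ difference-ReLUs per layer are all that the construction requires.
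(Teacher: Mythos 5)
Your construction is the same divide-and-conquer scheme as the paper's: pad to $D = 2^{\lceil \log_2 d\rceil}$, then fold by pairwise maxima via $\max(a,b) = \tfrac12(\relu(a-b)+\relu(b-a)+a+b)$, and your arithmetic identification $2^{\lfloor \log_2(d-1)+1\rfloor} = 2^{\lceil \log_2 d\rceil}$ checks out. The one place you diverge is exactly the point you flag as the obstacle: how the affine carry term $a+b$ survives passage through a ReLU layer. The paper resolves this \emph{without} skip connections, by writing $a+b = \relu(a+b) - \relu(-(a+b))$ and devoting two additional ReLU units per pair to these signals; its map $H$ sends each pair $(a,b)$ to the four pre-activations $\pm(a-b), \pm(a+b)$, and the following linear map $A$ recombines $(\relu(a-b)+\relu(b-a)+\relu(a+b)-\relu(-a-b))/2 = \max(a,b)$. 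This keeps the network strictly alternating linear/ReLU, at the price of four neurons per pairwise max --- so the paper's own construction has $k$th-layer width $4D/2^{k} = 2D/2^{k-1}$, twice the width stated in the lemma. Your route keeps two neurons per pair and therefore matches the advertised widths exactly, but it genuinely requires the depth-$(k{+}1)$ linear map to read the raw input (or all earlier activations) in addition to the depth-$k$ ReLU outputs; as you yourself observe, $\relu(a-b)$ and $\relu(b-a)$ determine $a-b$ and $|a-b|$ but not $a+b$, so without that skip connection the induction breaks. Whether this is admissible turns on the paper's definition of a ``ReLU network'' as \emph{alternating} linear and ReLU layers --- compositions of linear maps do not create a bypass around a ReLU, so your reading is a strict enlargement of the model, and you should state it as an explicit hypothesis rather than an appeal to the expansive notion of a linear layer. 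A minor point in your favor: padding by duplicating a coordinate is a genuine linear map, whereas the paper pads by appending $-\infty$, which is not even a real-valued affine operation.
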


The idea is that a ReLU function can straightforwardly compute the max of two values, so after $\lceil \log_2(d)\rceil$ iterations of pairwise maxima can we compute the maximum of $d$ variables. For example, 

\begin{align*}
& \max(x_1, x_2, x_3, x_4, x_5) = \max(z_1, z_2) \\
& \textwhere z_1 = \max(z_3, z_4), z_2 = \max(z_5, z_6)  \\
& \textwhere z_3 = \max(x_1, x_2), z_4 = \max(x_2, x_3), z_5 = \max(x_3, x_4), z_6 = \max(x_4, x_5).
\end{align*}

We use this construction to form exact maxes as approximations to maxes of larger order subsequently.

\autoref{thm:min_approx_error} is a precise upper bound on the width and depth necessary to evaluate a max function. A converse to this lemma -- that $\textsc{max}$ cannot be written as a smaller network -- is the main contribution of this work and is proven in \autoref{thm:main_theorem_width}. Note however that our results are restricted to ReLU networks with weight patterns that are constrained to essentially form pairwise maxes -- we do not allow weights to move freely. This is reasonable given that universal approximation theorems (e.g. \cite{Hornik1991, Cybenko1989}) guarantee that wide neural networks can evaluate any function, thus any practical analysis of max pooling needs to be made at a constrained width. 

In \autoref{sec:family_of_approximations} we present some abstract analysis of the max function, divorced from an interpretation as a deep learning model, and in \autoref{sec:complexity} we return to its implementation as a DNN.

\section{A family of approximations to the max function}
\label{sec:family_of_approximations}

\subsection{Subpool maxes}
\label{sec:subpool_maxes}
A subpool max is an operation that takes the maximum over a subset of the components of a vector. Formally, for a vector $x \in \bbR^d$ and some subset $R$ of $\{1, \hdots, d\}$ let $s(x; R) = \max\{x_j: j \in R\}$. For example, if $x = (3, 2, 10, 5)$, then $s(x; \{1, 2, 4\}) = \max(3, 2, 5) = 5$. Subpool maxes are an interesting generalization of the max function that present a natural tradeoff between complexity and accuracy. Let $C(k, r, d)$ denote the $k$th of subset of $\{1, \hdots, d\}$ of size $r$ (in the lexicographic ordering, without loss of generality). For example, $C(1,2,3)=\{1,2\}, C(2,2,3)=\{1,3\}$ and $C(3,2,3)=\{2,3\}$.  
For $R \subseteq \{0, 1, \hdots, d-1\}$ let $\calM_d(R)$ denote the space of affine combinations of subpool maxes of $r \in R$ variables, 

\begin{align}
\label{eqn:affine_combinations}
\calM_d(R) = \left\{ x \mapsto \beta_0 + \sum_{r \in R \backslash \{0\}} \sum_{j=1}^{{d \choose r}} \beta_r^j s(x; C(j, r, d)) : \beta_r^j, \beta_0 \in \bbR \right\},
\end{align}

if $0 \in R$, if $0 \not\in R$, then the intercept $\beta_0$ is omitted. \autoref{thm:main_theorem_width} shows that $\textsc{max} \not \in \calM_d(\{0, 1, \hdots, d - 1\})$, with a bound on the $L_\infty$ error from this function class. Before presenting the theorem, we start with a simple observation that is interesting in its own right.
 
\begin{lemma}
\label{thm:subpool_max_sums}
Let $S(x; r, d) = \frac{1}{{d \choose r}} \sum_{j=1}^{{d \choose r}} s(x; C(j, r, d))$ be the average of all ${d \choose r}$ subpool maxes of $x \in \bbR^d$ of order $r$. 
Then

\begin{align}
\label{eqn:Sxrd}
S(x; r, d) = \frac{1}{{d \choose r}} \sum_{j = 1}^{d - r + 1} {d - j \choose r - 1}x_{(j)}.
\end{align}
\end{lemma}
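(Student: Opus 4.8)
The plan is to evaluate the unnormalized sum $\sum_{j=1}^{\binom{d}{r}} s(x; C(j,r,d))$ over all size-$r$ subsets directly, by grouping the subsets according to which order statistic of $x$ attains the subpool maximum. Since $s(x;R)$ depends only on the \emph{values} indexed by $R$ and not on the labels, I would first relabel the coordinates so that $x_1 \ge x_2 \ge \cdots \ge x_d$, identifying index $j$ with the order statistic $x_{(j)}$. This relabeling is without loss of generality and does not change the multiset of subpool maxes being averaged.

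With this labeling, for any subset $R \subseteq \{1, \hdots, d\}$ the maximum $s(x;R)$ equals $x_{(j)}$, where $j = \min R$ is the smallest index (hence the largest value) in $R$. If $x$ has ties this assignment still returns the correct value, since equal entries share the same $x_{(j)}$, so the argument is unaffected. Hence the total sum decomposes as $\sum_{R} s(x;R) = \sum_{j=1}^{d} N_j\, x_{(j)}$, where $N_j$ is the number of size-$r$ subsets with $\min R = j$.

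The key step is the count $N_j = \binom{d-j}{r-1}$: a size-$r$ subset has minimum index $j$ exactly when it contains $j$ and draws its remaining $r-1$ elements from the $d-j$ indices larger than $j$. This is zero unless $d-j \ge r-1$, i.e. $j \le d-r+1$, which produces precisely the stated upper limit of summation. Substituting this count and dividing by $\binom{d}{r}$ then yields \eqref{eqn:Sxrd}.

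Finally, as a consistency check I would confirm that the weights sum to $\binom{d}{r}$, as must hold since $S(\cdot;r,d)$ is an average of $\binom{d}{r}$ terms: the substitution $i = d-j$ turns $\sum_{j=1}^{d-r+1}\binom{d-j}{r-1}$ into $\sum_{i=r-1}^{d-1}\binom{i}{r-1} = \binom{d}{r}$ by the hockey-stick identity. I do not anticipate a genuine obstacle here, as the entire argument is an elementary grouping computation; the only point demanding care is the bookkeeping between subset indices and ranks, together with the benign handling of ties.
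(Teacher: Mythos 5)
Your proposal is correct and follows essentially the same argument as the paper: both count, for each order statistic $x_{(j)}$, the number of size-$r$ subsets in which it is the maximum, obtaining $\binom{d-j}{r-1}$ by choosing the remaining $r-1$ elements from the $d-j$ lower-ranked indices. Your additional hockey-stick consistency check and explicit handling of ties are fine but not needed beyond what the paper does.
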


\begin{proof}
The $j$th largest value in $x$, $x_{(j)}$, will be the largest value within a subpool if and only if all indices $k < j$ are excluded from that subpool, and $j$ is not excluded. For a subpool of size $r$ the $r-1$ remaining values -- $x_{(j+1)}, \hdots, x_{(d)}$ -- must be chosen from amongst the $d-j$ indices not less than $j$. 
\end{proof}

When $r = 1$ in \autoref{eqn:Sxrd}, the expression reduces to a simple average, which is only slightly informative about the maximum, but when $r = d-1$, $S(x; d - 1, d) = ((d - 1) x_{(1)} + x_{(2)}) / d$, making it a quite reasonable approximation to $x_{(1)}$. For example at the point $x$ which is all zeros except for a single coordinate of 1 the average will have an error of $1 - 1 / d$, whilst the average of maxes over subpools of size $d-1$ will have an error of $1 / d$. 

In essence, the average of subpool maxes of a certain order $r \in R$ give a summary of the quantiles of the distribution via a particular weighted average, with better fidelity to the max for larger $r$. The idea is demonstrated in \autoref{fig:statistical_sorting}.

\begin{figure}
  \centering
  \ifpaper
    \input{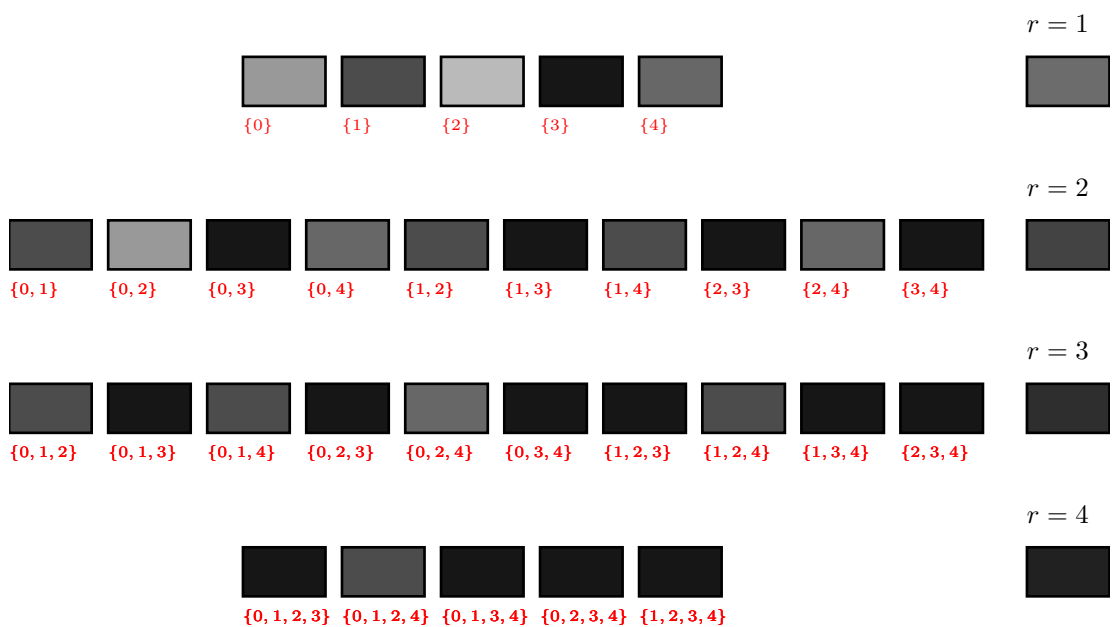}
  \fi
  \caption{A demonstration of the averaged subpool maxes for $d = 5$. Here the numerical value is presented as darkness in an inverted grayscale. Each row $r = 1, 2, 3, 4$ indicates the averaged subpool maxes over pools of order $r$, the included terms in the max are in red below. As the order of the subpool maxes grows, the averaged value (on the right) grows darker towards the actual max of the $d$ values.}
\label{fig:statistical_sorting}
\end{figure}

\subsection{A family of approximation bounds}
\autoref{eqn:Sxrd}, which expresses the average of subpool maxes as a linear combination of order statistics, is convenient for computation, and we are ready for an important theorem about the depth required to emulate a max pooling.

\begin{lemma}
An upper bound on the error of an $R$ estimator is given by the solution to this convex optimization problem, in $2(d + 1)$ constraints, and 1 + $|R|$ variables.

\begin{equation}
\begin{aligned}
\label{eqn:criterion}
&\min_{g, \beta_0, (\beta_r, r \in R \backslash \{0\})}\ g \textnormal{ subject to } \\
& |1 - \beta_0 - \sum_{r \in R \backslash \{0\}} \beta_r S((1, 1, 1, \hdots, 1, 1); r, d) | \le g, \\
& |1 - \beta_0 - \sum_{r \in R \backslash \{0\}} \beta_r S((1, 1, 1, \hdots, 1, 0); r, d) | \le g, \\
& |1 - \beta_0 - \sum_{r \in R \backslash \{0\}} \beta_r S((1, 1, 1, \hdots, 0, 0); r, d) | \le g, \\
& \hdots \\
& |1 - \beta_0 - \sum_{r \in R \backslash \{0\}} \beta_r S((1, 0, 0, \hdots, 0, 0); r, d) | \le g, \textnormal{ and } \\
&|\beta_0 + \sum_{r \in R \backslash \{0\}} \beta_r S(0; r, d) | \le g.
\end{aligned}
\end{equation}

\end{lemma}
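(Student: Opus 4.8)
The plan is to show that although the error $\|\textsc{max} - g\|_\infty$ of an estimator $g \in \calM_d(R)$ is a supremum over a continuum of inputs, it is in fact controlled by its values on only $d+1$ points, after which the minimization over the coefficients collapses to the linear program in \autoref{eqn:criterion}. First I would argue that the optimal estimator may be taken symmetric: because $\textsc{max}$ and the evaluation domain are invariant under coordinate permutations, replacing any $g \in \calM_d(R)$ by its permutation average $\bar g(x) = \frac{1}{d!}\sum_\pi g(x_\pi)$ does not increase $\|\textsc{max} - g\|_\infty$, by the triangle inequality and permutation-invariance of $\textsc{max}$. Since averaging identifies all subpool maxes of a common order $r$, $\bar g$ lies in the span of the intercept and the averaged subpool maxes $S(\,\cdot\,; r, d)$, so it suffices to search over estimators of the form $g(x) = \beta_0 + \sum_{r \in R\setminus\{0\}} \beta_r\, S(x; r, d)$, which carry exactly $1 + |R|$ free parameters once the auxiliary variable bounding the error is included.

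Next I would localize the supremum. By positive homogeneity of $\textsc{max}$ and of each $S(\,\cdot\,; r, d)$ together with permutation symmetry, the worst-case error is attained on the sorted order simplex $D = \{x : 1 \ge x_{(1)} \ge \cdots \ge x_{(d)} \ge 0\}$. The crucial observation is that, by \autoref{thm:subpool_max_sums}, each $S(x; r, d)$ is a fixed linear combination of the order statistics $x_{(1)}, \dots, x_{(d)}$, and $\textsc{max}(x) = x_{(1)}$; hence on $D$, where the sort order is constant, the error $e(x) = \textsc{max}(x) - g(x)$ is an affine function of $x$. The absolute value of an affine function is convex, so $|e|$ attains its maximum over the polytope $D$ at a vertex. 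The $d+1$ vertices of $D$ are precisely the staircase points $v_k = (\underbrace{1, \dots, 1}_{k}, \underbrace{0, \dots, 0}_{d-k})$ for $k = 0, 1, \dots, d$, giving $\|\textsc{max} - g\|_\infty = \max_{0 \le k \le d} |e(v_k)|$.

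It then remains to identify the constraints. Evaluating gives $\textsc{max}(v_k) = 1$ for $k \ge 1$ and $\textsc{max}(v_0) = 0$, while $S(v_k; r, d) = 1 - \binom{d-k}{r}/\binom{d}{r}$, so $|e(v_k)|$ is exactly the left-hand side of the $k$th constraint in \autoref{eqn:criterion} (the staircase vectors there run from $k = d$ down to $k = 0$). Introducing the epigraph variable $g$ and writing each $|e(v_k)| \le g$ as two linear inequalities turns the minimization of $\max_k |e(v_k)|$ over the coefficients into the stated linear, hence convex, program, with $2(d+1)$ constraints and $1 + |R|$ variables. Any feasible $(g, \beta)$ then exhibits a concrete symmetric estimator in $\calM_d(R)$ whose error over $D$ equals $\max_k |e(v_k)| \le g$, which is the claimed upper bound.

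The step I expect to be the main obstacle is the localization. Rigorously justifying that the supremum over the full, unbounded input space collapses first to the order simplex $D$ — via positive homogeneity and the role of the additive constant $\beta_0$ — and then to its vertices is where the care lies; one must confirm that no input with coordinates of large magnitude, or with a sort order differing from that on $D$, can produce a larger error. The vertex argument itself is clean once the order-statistic representation of \autoref{thm:subpool_max_sums} is in hand, since it renders $e$ affine on each sorting chamber; the subtlety is entirely in pinning down the normalization implicit in the notion of an ``$R$ estimator'' so that $D$ genuinely captures the worst case.
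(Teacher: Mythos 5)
Your proof is correct, and it takes a genuinely different (and in one respect stronger) route than the paper's. The paper's own proof simply observes that restricting the supremum to the finite staircase set $P$ can only shrink the optimal value, i.e.\ it establishes $\min_m \max_{x\in P}|m(x)-\textsc{max}(x)| \le \min_m \|m-\textsc{max}\|_\infty$, and then applies the standard epigraph reformulation; read literally this is a \emph{lower} bound on the optimal error, which sits awkwardly with the lemma's ``upper bound'' wording. Your argument closes exactly that gap: after the same symmetrization step (the paper's \autoref{lemma:permutation_invariance}), you use \autoref{thm:subpool_max_sums} to observe that on the sorted chamber $D=\{x: 1\ge x_1\ge\cdots\ge x_d\ge 0\}$ the error is an affine function of $x$, so its absolute value is convex and attains its maximum at a vertex of $D$, and the vertices of $D$ are precisely the staircase points. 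Hence the finite program computes the sup-norm error of a symmetric estimator \emph{exactly}, so its optimum is simultaneously an upper and a lower bound on $\err(R)$ and the minimizing coefficients realize it --- whereas the paper must re-derive tightness case by case in \autoref{thm:main_theorem_width}. Two minor points: the appeal to positive homogeneity is unnecessary (and not literally available once $\beta_0\neq 0$); permutation symmetry alone reduces $[0,1]^d$ to $D$, and since the norm in \autoref{thm:main_theorem_width} is taken over the unit cube there is no unbounded domain to worry about. Your closed form $S(v_k;r,d)=1-\binom{d-k}{r}\big/\binom{d}{r}$ is correct but not needed for the statement itself.
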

\begin{proof}
Let $\calM_d^s(R)$ denote the restriction of $\calM_d(R)$ to functions that are symmetric in their arguments. We show in \autoref{lemma:permutation_invariance} that maximizing over this set suffices. For any set of points $P \subseteq [0, 1]^d$, it is clear that

\begin{align*}
\min_{m \in \calM_d^s(R)} || m - \textsc{max} ||_\infty \ge \min_{m \in \calM_d^s(R)} \max_{x \in P} |m(x) - \textsc{max}(x)|.
\end{align*}

We use this with 

\begin{align*}
P = \{ 
& (1, 1, 1, \hdots, 1, 1), \\
& (1, 1, 1, \hdots, 1, 0), \\
& (1, 1, 1, \hdots, 0, 0), \\
&\vdots \\
& (1, 0, 0, \hdots, 0, 0) \\
& (0, 0, 0, \hdots, 0, 0)\}.
\end{align*}

So that 
\begin{align*}
\min_{m \in \calM_d^s(R)} \max_{x \in P} |m(x) - \textsc{max}(x)|.
\end{align*}

And \autoref{eqn:criterion} follows from a standard trick for rewriting $L_\infty$ optimization (see, for example, \cite{Boyd2004}). 
\end{proof}

This is a convex optimization problem that can be efficiently solved by standard software. \autoref{sec:optimal_approximation_error} shows the results of this computation for small $d$. 

We solve \autoref{eqn:criterion} as functions of $d$ for three particularly interesting $R$ in the main theorem of this paper, \autoref{thm:main_theorem_width}. In the case of \autoref{eqn:d1} and \autoref{eqn:0d1} we further establish tightness of the bound. 

\begin{theorem}
\label{thm:main_theorem_width} 
Let $|| \cdot ||_\infty$ denote the $L_\infty$ norm of a function defined over the unit cube. Let $\err(R) = \min_{m \in \calM_d(R)} || m - \textsc{max} ||_\infty$. 

\begin{align}
\label{eqn:d1}
\err(\{d-1\}) &= 1 / (2d - 1)  \\
\label{eqn:0d1}
\err(\{0, d-1\}) &= 1 / (2d) \\
\label{eqn:012d1}
\err(\{0, 1, 2, \hdots, d-1\}) &\le 1 / 2^d.
\end{align}
\end{theorem}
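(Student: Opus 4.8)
The plan is to exploit two structural reductions that make all three identities finite-dimensional, and then attack each $R$ with the appropriate Chebyshev-type argument. First I would invoke \autoref{lemma:permutation_invariance} to restrict to the symmetric subclass $\calM_d^s(R)$, so that both $m$ and $\textsc{max}$ depend only on the sorted vector $(x_{(1)}, \hdots, x_{(d)})$. By \autoref{thm:subpool_max_sums} each $S(\cdot; r, d)$ is \emph{linear} in the order statistics, and $\textsc{max}(x) = x_{(1)}$ is too, so the error $m(x) - \textsc{max}(x)$ is an affine function on the sorted simplex $\Sigma = \{1 \ge x_{(1)} \ge \hdots \ge x_{(d)} \ge 0\}$. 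An affine function attains its extrema at the vertices of $\Sigma$, which are exactly the $d+1$ points $x^{(k)}$ of $P$ (with $k$ ones and $d-k$ zeros). Hence $\| m - \textsc{max}\|_\infty = \max_{0 \le k \le d} |m(x^{(k)}) - \textsc{max}(x^{(k)})|$ for every symmetric $m$, turning each $\err(R)$ into a finite minimax over the coefficients. I would also record the vertex evaluations once: a size-$r$ subpool of $x^{(k)}$ takes the value $1$ unless it avoids all $k$ ones, so $S(x^{(k)}; r, d) = 1 - \binom{d-k}{r}/\binom{d}{r}$, while $\textsc{max}(x^{(k)}) = 1$ for $k \ge 1$ and $0$ for $k = 0$.

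For $R = \{d-1\}$, writing $\beta := \beta_{d-1}$ and using $S(x^{(1)}; d-1, d) = (d-1)/d$ and $S(x^{(k)}; d-1, d) = 1$ for $k \ge 2$, the only active errors are $|1 - \beta|$ (at $k \ge 2$) and $|1 - \beta(d-1)/d|$ (at $k = 1$). Balancing these gives $\beta = 2d/(2d-1)$ and common value $1/(2d-1)$, which is the upper bound; for the matching lower bound I would certify optimality with the signed combination of the vertex quantities with coefficients $(d-1, -d)$, which annihilates $\beta$, evaluates to $-1$, and has $\ell_1$-norm $2d-1$, forcing the minimax to be $\ge 1/(2d-1)$. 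For $R = \{0, d-1\}$ the intercept supplies one more degree of freedom, so I can equioscillate across the three vertex values at $k \ge 2$, $k = 1$, and $k = 0$; solving yields $\beta_0 = 1/(2d)$, $\beta_{d-1} = 1$, and value $1/(2d)$, with the dual combination $(d-1, -d, 1)$ ($\ell_1$-norm $2d$, evaluating to $-1$) certifying the lower bound.

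For $R = \{0, 1, \hdots, d-1\}$ the decisive observation is that $\binom{d-k}{r}$ is a degree-$r$ polynomial in $k$, so the node-restrictions of $1, S(\cdot; 1, d), \hdots, S(\cdot; d-1, d)$ have degrees $0, 1, \hdots, d-1$; being triangular, they span exactly the polynomials of degree $\le d-1$ in $k$. Thus the vertex-sequence $k \mapsto m(x^{(k)})$ ranges over all such polynomials, and minimizing the error reduces to best degree-$\le d-1$ polynomial approximation of the step data $y = (0, 1, \hdots, 1)$ at the equispaced nodes $0, 1, \hdots, d$. The unique functional annihilating degree-$\le d-1$ polynomials is the $d$-th finite difference $L(y) = \sum_{k=0}^d (-1)^{d-k} \binom{d}{k} y_k$, for which $\| L \|_1 = \sum_k \binom{d}{k} = 2^d$; since $\sum_{k=0}^d (-1)^{d-k}\binom{d}{k} = (1-1)^d = 0$, its value on the step data is $|L(y)| = \binom{d}{0} = 1$. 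The standard $L_\infty$ duality then gives minimax error $|L(y)|/\| L \|_1 = 1/2^d$, and lifting the optimal polynomial back to $m$ and applying the affineness reduction yields $\err(\{0, \hdots, d-1\}) \le 1/2^d$ (indeed with equality).

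The shared affineness-to-vertices reduction is the linchpin that keeps everything finite, and I expect the main obstacle to be the third identity: one must verify the triangular degree structure that pins the attainable vertex-sequences down to precisely the degree-$\le d-1$ polynomials, and then correctly identify the extremal functional with the $d$-th finite difference and evaluate it through $(1-1)^d$. Once that reduction is in place the first two identities are comparatively routine balancing and $\ell_1$-duality computations on only two or three active vertices.
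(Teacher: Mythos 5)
Your proposal is correct, and while it shares the paper's skeleton (symmetrize via \autoref{lemma:permutation_invariance}, reduce to the $d+1$ ``staircase'' vertices of the sorted region, balance errors for the first two cases), it diverges in two genuinely useful ways. First, for the lower bounds in \autoref{eqn:d1} and \autoref{eqn:0d1} you produce explicit $\ell_1$-normalized annihilating functionals (e.g.\ $(d-1,-d,1)$ with $\ell_1$-norm $2d$ killing both the constant and $S(\cdot;d-1,d)$ while returning $-1$ on the max data), whereas the paper argues tightness by an ad hoc contradiction between the constraints at $(1,1,0,\hdots,0)$ and $(1,0,\hdots,0)$; your certificates are cleaner and, combined with your observation that each $m\in\calM_d(R)$ is affine on the sorted simplex so its error peaks at a vertex, they also close a small gap in the paper, which verifies the candidate $\beta$ only at the chosen points rather than over the whole cube. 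Second, and more substantially, for \autoref{eqn:012d1} the paper guesses an explicit $\beta^\star$ with alternating binomial weights and verifies by direct computation in the $V$-representation that the residual collapses to $(1/2^d)(\lambda_3+\lambda_5+\hdots)-1/2^d$; you instead observe that $S(x^{(k)};r,d)=1-\binom{d-k}{r}/\binom{d}{r}$ is a degree-$r$ polynomial in $k$, so the vertex-restricted approximating space is exactly $\mathcal{P}_{d-1}$ on the equispaced nodes $0,\hdots,d$, and the one-dimensional annihilator is the $d$th finite difference with $\ell_1$-norm $2^d$ and value $1$ on the step data. That route explains \emph{why} the answer is $2^{-d}$ and the coefficients alternate, and it yields equality rather than only the upper bound the theorem asserts (consistent with the paper's numerical table); the paper's construction, by contrast, is self-contained and avoids appealing to Chebyshev/LP duality on a finite node set. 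Both are valid; just be sure, if you write this up, to state the duality identity $\min_{v\in V}\|y-v\|_\infty = |L(y)|/\|L\|_1$ for a one-dimensional annihilator explicitly, and to fix the orientation of your two-point certificate for \autoref{eqn:d1} (the coefficient $d-1$ must sit on the vertex where $S=1$ and $-d$ on the vertex where $S=(d-1)/d$ for the combination to annihilate $\beta_{d-1}$).
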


\begin{hproof}
The idea of the proof of all three assertions is to assumes that the $L_\infty$ norm of the error at is characterized by a few key vertices. For \autoref{eqn:d1} and \autoref{eqn:0d1} these amount to the origin and one or two other points at which the max is one. 
Under this conjecture, the norm can be optimized by computed by evaluating the error at each point, recognizing the tension between them, and finding coefficients which equate them. Given the conjectured error, we can then prove it is optimal by contradiction. For \autoref{eqn:012d1}, the argument is a bit more involved, and we approach it using the vertex representation. Because of the greater complexity not attempt to prove the equality of the conjecture, but settle for an upper bound. 
\end{hproof}

Contrasting \autoref{eqn:d1} and \autoref{eqn:0d1} serves mainly to highlight the importance of the intercept. Since the intercept requires negligible computation, we assume its inclusion subsequently.

Bounding the $L_\infty$ error is a strong result, but one that requires a bit of interpretation, since it is possible for the $L_\infty$ error to be high only on a set of zero measure (and therefore possibly a technical detail that is not pertinent in applications). We show that this is not the case in \autoref{lemma:nonnegligible_measure}, which demonstrates lower bound on the measure of a set on which the $L_1$ norm is high. 
In the subsequent discussion, we call the optimal estimate based on the terms $R \subseteq \{0, 1, \hdots, d\}$ the $R$-estimate, and let $\beta^\star$ denote the optimal coefficients.

\begin{lemma}
\label{lemma:nonnegligible_measure}
For $\epsilon< \err(R) / 2$, let $W(\epsilon; R) = \{x \in [0, 1]^d : |x_{(1)} - \beta_0^\star - \beta^{\star \top} B(d) S(x;, r, d) | \ge \epsilon \}$ denote the subset of the unit cube where the error of an $R$-estimator is at least $\epsilon$. Then for all $R$ with $0 \in R$, $\vol(W(\epsilon; R)) \ge (\err(R) / 2 - \epsilon)^d$, where $\vol$ is the Lebesgue measure over $[0, 1]^d$. 
\end{lemma}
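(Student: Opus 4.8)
The plan is to locate a point where the optimal $R$-estimator attains its worst-case error $\err(R)$, and then show that the error relaxes toward zero only slowly as one moves away from that point, so that a full-dimensional box around it is contained in $W(\epsilon; R)$ and contributes the claimed volume.

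First I would reduce to the symmetric optimum. By \autoref{lemma:permutation_invariance} the optimal estimator $m^\star$ may be taken permutation-invariant, so by \autoref{thm:subpool_max_sums} it can be written purely in terms of order statistics, $m^\star(x) = \beta_0^\star + \sum_{j=1}^d \gamma_j x_{(j)}$, where the $\gamma_j$ are the coefficients induced through \autoref{eqn:Sxrd} and $\| e \|_\infty = \err(R)$ for $e := \textsc{max} - m^\star$. Writing the partial sums $P_k = \sum_{j \le k} \gamma_j$ with $P_0 = 0$, the constraints of \autoref{eqn:criterion} evaluated at the vertices $(1, \dots, 1, 0, \dots, 0)$ state exactly that $|1 - \beta_0^\star - P_k| \le \err(R)$ for $1 \le k \le d$, while the origin constraint states $|\beta_0^\star| \le \err(R)$.

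Next I would use $0 \in R$ to pin the worst case at the origin. Identifying the active constraints of \autoref{eqn:criterion}, as in the proof of \autoref{thm:main_theorem_width}, gives $\beta_0^\star = \err(R)$ (the case $\beta_0^\star = -\err(R)$ being symmetric), so $e(0) = -\err(R)$ and the origin maximizes $|e|$. Substituting $\beta_0^\star = \err(R)$ into the vertex constraints yields the key bounds $1 - 2\err(R) \le P_k \le 1$ for every $k$. For $x$ in the positive orthant the order statistics obey $x_{(1)} \ge \dots \ge x_{(d)} \ge 0$, and summation by parts gives $\sum_j \gamma_j x_{(j)} = \sum_{k=1}^d P_k\,(x_{(k)} - x_{(k+1)})$ with $x_{(d+1)} := 0$ and all increments $x_{(k)} - x_{(k+1)} \ge 0$. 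Bounding each $P_k$ below by $1 - 2\err(R)$ collapses the telescoping sum to $\sum_j \gamma_j x_{(j)} \ge (1 - 2\err(R))\,x_{(1)}$, whence
\[
e(x) = x_{(1)} - \err(R) - \sum_j \gamma_j x_{(j)} \le \err(R)\,(2 x_{(1)} - 1).
\]

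Finally I would turn this into a volume bound. On the box $B = [0,\, \err(R)/2 - \epsilon]^d \subseteq [0,1]^d$ every coordinate, hence $x_{(1)}$, is at most $\err(R)/2 - \epsilon$, so the display gives $e(x) \le \err(R)(\err(R) - 2\epsilon - 1) \le -\epsilon$, using $\epsilon < \err(R)/2$ together with $\err(R) \le 1/2$; thus $|e(x)| \ge \epsilon$ throughout $B$, so $B \subseteq W(\epsilon; R)$ and $\vol(W(\epsilon; R)) \ge \vol(B) = (\err(R)/2 - \epsilon)^d$. The main obstacle is the previous step: establishing rigorously that the worst-case error is attained at a cube vertex with $|\beta_0^\star| = \err(R)$, and controlling how slowly $|e|$ decays into the interior. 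The decay is handled cleanly by the summation-by-parts estimate above, so the delicate part is the active-constraint identification, and this is exactly where the hypotheses $0 \in R$ and $\epsilon < \err(R)/2$ enter. Should the origin fail to bind for some such $R$, the same argument localizes instead around whichever $\{0,1\}$-vertex is binding, where each coordinate still has room to move inward to form a one-sided box of the same side length. The resulting bound is deliberately loose, the estimate in fact permitting a box of side close to $1/2$, but the clean radius $\err(R)/2 - \epsilon$ suffices.
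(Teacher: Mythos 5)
Your overall strategy---find a point where the optimal estimator errs by the full $\err(R)$, then show the error decays slowly enough into the cube that a box of side $\err(R)/2-\epsilon$ stays inside $W(\epsilon;R)$---is the same as the paper's, and your box is literally the same box, $[0,\err(R)/2-\epsilon]^d$ anchored at the origin. Where you genuinely diverge is in how you control the decay away from the origin. The paper invokes \autoref{lemma:increasing} (the optimal estimator is weakly increasing), so that for any $p$ with all coordinates at most $\delta$ one has $f(p)\ge f(0)=\beta_0^\star$ while $\textsc{max}(p)\le\delta$, giving an error of at least $\beta_0^\star-\delta$ in one line. You instead derive a quantitative decay bound $e(x)\le\err(R)(2x_{(1)}-1)$ by summation by parts, using only the $L_\infty$ constraints at the $\{0,1\}$-vertices to bound the partial sums $P_k$. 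Your computation checks out (the telescoping identity, the bound $1-2\err(R)\le P_k\le 1$, and the final inequality using $\err(R)\le 1/2$ are all correct), and it is arguably more self-contained since it does not need the monotonicity lemma. Both routes, however, stand or fall on the same pivot: knowing that $\beta_0^\star$ is large, specifically $\beta_0^\star=\err(R)$.

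That pivot is where your proof has a real gap. You justify $\beta_0^\star=\err(R)$ by ``identifying the active constraints as in the proof of \autoref{thm:main_theorem_width},'' but that proof only carries out the active-constraint analysis for the three specific choices $R=\{d-1\}$, $\{0,d-1\}$, and $\{0,1,\hdots,d-1\}$, whereas the lemma is asserted for every $R$ containing $0$. The paper closes exactly this gap with \autoref{thm:all_my_work} (via \autoref{thm:perturbation_analysis}): for general $R$ the optimal linear part satisfies $L(\beta^\star)\ge 0$ with minimum zero, which pins the optimal intercept at half the range of $L(\beta^\star)$ and ties it to $\err(R)$. (The paper's own bookkeeping wobbles by a factor of two here---its proof quotes $\beta_0^\star=\err(R)/2$ while its worked examples, e.g.\ $\beta_0^\star=1/(2d)=\err(\{0,d-1\})$, give $\beta_0^\star=\err(R)$, which is the value you use and the one consistent with \autoref{eqn:first}; either value suffices for the stated $(\err(R)/2-\epsilon)^d$.) Your fallback sentence---relocalizing ``around whichever $\{0,1\}$-vertex is binding''---does not actually repair the gap: your decay estimate is specific to the origin, where $\textsc{max}$ vanishes and the error equals $-\beta_0^\star$ plus a controlled nonnegative term; near a vertex with some coordinates equal to one, both $\textsc{max}$ and the estimator move as you perturb, and no analogous one-sided estimate has been established. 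So you should replace the appeal to the three worked cases with an appeal to \autoref{thm:all_my_work} (or reprove that fact); with that substitution your argument is complete.
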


This bound could be improved in a number of ways discussed in \autoref{sec:proof_nonnegligible_measure}, however, it suffices to demonstrate that the error is high on a subset of the unit cube of nonnegligible measure. In \autoref{sec:main_theorem_width_L2} we solve for the $L_2$ error of \autoref{eqn:012d1} in closed form and find that it is also not zero. 

\autoref{eqn:012d1} forms our main result: even using all lower-order subpool maxes one cannot simulate the maximum of $d$ variables. Two main points emerge from this analysis: (1) the quality of the approximation can be quite good, with exponentially small error in $d$, and (2) introducing lower order subpool maxes into the approximation reduces approximation error, and not just a little bit. The first point is contrary to the usual ``curse of dimensionality'' -- the quality of the approximation improves in higher dimensions. This can be explained by the tremendous complexity of the approximation, with higher $d$, exponentially many more terms can be included in the approximation. Indeed, increasing the dimensionality, but allowing the number of the terms in the approximation to grow only proportionally only lowers the error like $1 / d$. 

Observe that all orders of subpool max contribute to the quality of an approximation: although the subpool maxes of order $d - 1$ are most informative about the max, all subpool max averages improve an estimate. Examining the approximations, the explanation is clear: including lower order maxes can offset loadings on lower order statistics so that an estimator can more precisely extract the loading on higher order statistics. Indeed, in the extreme case where all terms are included, there is an alternating pattern in the coefficients signs (see \autoref{eqn:lambda_weights}). And this effect is strong, the inclusion of all terms improves order of approximation error from $O(1/d)$ to $O(1 / 2^{d})$.


\section{The complexity of $\calM_d(R)$}
\label{sec:complexity}

How can we reconcile the very disparate orders of estimation error between \autoref{eqn:d1} and \autoref{eqn:012d1}? By noting that the number of intermediate calculations necessary to form an estimator in $\calM_d(\{0, 1, 2, \hdots, d-1\})$ is high, whilst the features needed to construct an element of $\calM_d(\{0, d-1\})$ can be limited. In order to show that the computation required to evaluate an element of $\calM_d(\{0, 1, 2, \hdots, d-1\})$ is considerable, we require some further concepts.

\begin{definition}[Implementing a subpool-max sum, implementing an $R$-estimator]
\label{def:implementation}
We say that a deep neural network \emph{implements} a subpool max of order $r$ if $s(x; C(j, r, d))$ is computed by neurons in the network for all $j = 1, \hdots, {d \choose r}$. We say that a DNN implements an $R$-estimator if it implements all orders $r \in R$.
\end{definition}

The idea is that following a forward pass through the network, all of the subpool maxes have been computed as nodes in the computational graph. 

In order for \autoref{def:implementation} to be meaningful, we need some assurance that in fact evaluating $S(x; r, d)$ is only possible by computing $s(x; C(j, r, d))$ for all $j$. Clearly, this is not true in general -- for instance, if sorting is a permitted operation then evaluating $S(x; r, d)$ is trivial. However, it is true under a natural model of being computed by a DNN, defined in \autoref{def:functional_decomposition}.

\begin{definition}[Functional decomposition of order $r$]
\label{def:functional_decomposition}
We say that a function $f: \bbR^d \rightarrow \bbR$ has an order-$r$ functional decomposition if there exist functions $f_1, f_2, \hdots, f_{K}$ such that $f = \sum_k f_k$ and each $f_k$ is a function of only $r$ coordinates.

We say that a function $f: \bbR^d \rightarrow \bbR$ has an order-$r$ functional decomposition in normalized form if there exist functions $f_1, f_2, \hdots, f_{{d \choose r}}$ such that $f = \sum_k f_k$ and $f_k$ is a function of the coordinates in $C(k, r, d)$ alone. 

\end{definition}

(General) functional decompositions can be put into a normalized form in $O(K)$ time and space by looping over each element of the functional decomposition, summing all terms that are a function of the same index set, then permuting the indices. 

When it is apparent, we omit the ``order-$r$'' bit. \autoref{thm:main_lower_bound} shows that $S(x; r, d)$ does not have an order-$r$ functional decomposition of any size $K < {d \choose r}$, and thus entails $O({d \choose r})$ calculation. 


\begin{theorem}
\label{thm:main_lower_bound}
In any order-$r$ functional decomposition of $S(x; r, d)$, $f_1, \hdots, f_K$, $K \ge {d \choose r}$. Implementing a $\{0, 1, 2, \hdots, d - 1\}$ estimator entails $O(2^d)$ computation. 
\end{theorem}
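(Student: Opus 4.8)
The plan is to prove the two assertions in sequence: first the combinatorial lower bound $K \ge {d \choose r}$ on any order-$r$ functional decomposition of $S(x; r, d)$, and then to aggregate this bound over $r$ to obtain the $O(2^d)$ count for the full estimator. For the lower bound, the key device I would introduce is a \emph{mixed finite-difference operator} $D_T$, one for each $r$-subset $T \subseteq \{1, \hdots, d\}$. Setting each coordinate $x_i$ with $i \in T$ to either a low value $0$ or a high value $1$ (and fixing the remaining $d - r$ coordinates at $0$), I define
\begin{align*}
D_T g = \sum_{A \subseteq T} (-1)^{|T| - |A|}\, g(x^A), \qquad x^A_i = \chi_{\{i \in A\}} \textnormal{ for } i \in T.
\end{align*}
The crucial algebraic fact, valid for an \emph{arbitrary} function $g$, is that $D_T g = 0$ whenever $g$ fails to depend on some coordinate $i \in T$: pairing each $A \not\ni i$ with $A \cup \{i\}$ yields exact cancellation. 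Consequently $D_T f_k = 0$ unless the support $T_k$ of $f_k$ contains $T$, and since $|T_k| \le r = |T|$ this forces $T_k = T$.

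Applying $D_T$ to $S(x; r, d) = \frac{1}{{d \choose r}} \sum_j s(x; C(j, r, d))$, every summand $s(x; C)$ with $C \ne T$ is annihilated, since as a set $C$ must miss at least one index of $T$; only the $T$-term survives. I would then evaluate $D_T \max\{x_i : i \in T\}$ directly on the corners: the max is $0$ at the all-low corner and $1$ at each of the other $2^r - 1$ corners, and a one-line binomial cancellation (using $\sum_{A \subseteq T}(-1)^{r-|A|} = (1-1)^r = 0$) gives
\begin{align*}
D_T S(\,\cdot\,; r, d) = \frac{(-1)^{r+1}}{{d \choose r}} \ne 0.
\end{align*}
Since $D_T S = \sum_{k : T_k = T} D_T f_k$, each of the ${d \choose r}$ subsets $T$ forces at least one $f_k$ with support exactly $T$; distinct $T$ demand distinct indices, so $K \ge {d \choose r}$.

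For the second assertion I would simply sum. By \autoref{def:implementation}, implementing the $\{0, 1, \hdots, d-1\}$-estimator requires, for each order $r \in \{1, \hdots, d-1\}$, that all ${d \choose r}$ subpool maxes $s(x; C(j, r, d))$ appear as distinct nodes (the order-$0$ term is a constant and is free). The first assertion, via \autoref{def:functional_decomposition}, certifies that these cannot be collapsed into fewer nodes under the functional-decomposition model, so the distinct subpool maxes across all orders are genuinely necessary. Counting them yields
\begin{align*}
\sum_{r=1}^{d-1} {d \choose r} = 2^d - 2 = O(2^d).
\end{align*}

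The main obstacle is the first assertion, and specifically verifying cleanly that $D_T$ isolates a single subpool-max term and that its value is nonzero. The delicate point is that $\textsc{max}$ is only piecewise linear, so I must work with finite differences rather than derivatives; the payoff is that the annihilation property is a purely combinatorial cancellation valid for arbitrary, possibly non-differentiable $f_k$, which is exactly what is needed because \autoref{def:functional_decomposition} imposes no regularity on the decomposition. Once the nonvanishing $D_T S \ne 0$ is established for every $T$, the pigeonhole step and the final summation are routine.
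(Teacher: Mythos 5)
Your proof is correct, but it takes a genuinely different route from the paper's. The paper partitions the unit cube into ``slivers'' $\Xi_k$ --- regions on which the coordinates of a particular $r$-subset dominate and are sorted --- writes $S(\cdot; r, d)$ as the sum of the indicator-weighted terms $\chi_{\Xi_k}(x) \, H(x; C(k,r,d))$, and then argues by induction that this is the \emph{unique} normalized functional decomposition, so that none of its ${d \choose r}$ nonzero pieces can be dispensed with. You instead exhibit, for each $r$-subset $T$, a dual linear functional $D_T$ (an alternating sum over the $2^r$ vertices obtained by toggling the coordinates in $T$, the rest pinned at $0$) that annihilates every function failing to depend on some coordinate of $T$ --- hence every $f_k$ whose coordinate set is not exactly $T$, and every subpool max $s(\cdot; C)$ with $C \ne T$ --- and you verify $D_T S = (-1)^{r+1}/{d \choose r} \ne 0$, which forces a distinct $f_k$ for each of the ${d \choose r}$ subsets by pigeonhole. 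The two approaches buy different things. Yours is the more robust certificate: it is a purely local evaluation at finitely many points, needs no regularity of the $f_k$, and crucially needs no uniqueness claim about decompositions --- which is the delicate point of the paper's induction, since functional decompositions are generically non-unique and the paper's own pieces $\chi_{\Xi_k}(x) H(x; C(k,r,d))$ depend on all $d$ coordinates through the indicator, so they are not themselves order-$r$ in the sense of \autoref{def:functional_decomposition}. The paper's construction, in exchange, produces an explicit piecewise description of $S$ on the cube that carries some structural interest beyond the counting bound. Your treatment of the second assertion (summing ${d \choose r}$ over $r = 1, \hdots, d-1$ to get $2^d - 2$ necessary subpool maxes under \autoref{def:implementation}) coincides with the paper's.
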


We would like to be able to conclude from \autoref{thm:main_lower_bound} that evaluating $S(x; r, d)$ with a DNN requires $O(2^d)$ computation, however proving that general DNNs \emph{cannot} do something efficiently is difficult (\cite{Abbe2020} is one example). The situation is that networks of the form introduced in \autoref{sec:width_effect} are a subset of all DNNs, and also a subset of all functional decompositions. We have shown that the second set has no representation with less than $O(2^d)$ computation, but the relationship between the two supersets is not clear. We find \autoref{thm:main_lower_bound} to be a reassuring first step and have begun looking at the more general approximation problem in ongoing work. 



In contrast to the very accurate approximation computed above, an upper bound on the number of lower-order subpool maxes necessary to compute an optimal $\{0, d - 1\}$-estimator is demonstrated constructively in \autoref{thm:d1_width}. The idea is that because we finally seek to evaluate subpool maxes that differ in only a single coordinate, significant reuse of subpool maxes is possible. 

One simple case is when the dimension is a power of two plus one. 
Then every intermediate subpool will be a power of two and at the $m$th split, there will be $2^m$ subtuples that are components of more than one tuple. Because the tuples are always of an even length, it is possible to build half of them from these shared subtuples. Thus $d + 2^{m - 1} - 1$ neurons are needed at the $m$th layer of the network.


\begin{theorem}
\label{thm:d1_width}
For $d \in \bbN$ and $j \le \lceil \log_2(d - 1)\rceil$ let $\zeta(d, j) = \lceil (d - 1) / 2^j \rceil$ be the size of a tuple that started with size $d - 1$ and has been halved $j$ times (rounding up). For $d \ge 3$, a $\{0, d - 1\}$-estimate can be computed by a ReLU network of depth less than $\textsc{depth}(d) = \lceil \log_2(d - 1)\rceil$, where the $j$th layer has width $w(j, d)$, given by:

\begin{equation}
\begin{aligned}
\label{eqn:width_condition}
w(d, 1) &= 2^{\lfloor \log_2(d - 2)\rfloor} + (d - 1) \\
w(d, j) &= 2^{\textsc{depth}(d) - j} \times (1 + \zeta(d, \textsc{depth}(d) - j)) \textnormal{ for } j = 2, 3, \hdots, \textsc{depth}(d).
\end{aligned}
\end{equation}
\end{theorem}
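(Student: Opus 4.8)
The plan is to give an explicit construction and then verify its correctness, depth, and per-layer width against the claimed formulas. By \autoref{thm:subpool_max_sums}, $S(x; d-1, d) = \tfrac{1}{d}\sum_{i=1}^{d} L_i(x)$, where $L_i(x) = \max\{x_j : j \neq i\}$ is the order-$(d-1)$ subpool max that omits coordinate $i$; and by \autoref{thm:main_theorem_width} the optimal $\{0, d-1\}$-estimate is the affine map $x \mapsto \beta_0^\star + \beta_{d-1}^\star S(x; d-1, d)$. Since an affine combination (the averaging together with $\beta_0^\star$ and $\beta_{d-1}^\star$) is a single linear layer and hence free, it suffices to exhibit a ReLU network that computes all $d$ leave-one-out maxes $L_1, \ldots, L_d$ within the stated depth and width; the output layer then forms the estimate. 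Each pairwise maximum is realized by the constant-size gadget $\max(a,b) = \tfrac12(\relu(a-b) + \relu(b-a) + a + b)$, with the linear part absorbed into the adjacent linear layers, so that width may be counted in units of ``one pairwise max or one pass-through wire.''

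First I would set up the core recursion. For an index block $B$, call $\bigl(\max(B),\,(L_i^B)_{i \in B}\bigr)$ the \emph{bundle} of $B$, a tuple of $1 + |B|$ values, where $L_i^B = \max\{x_j : j \in B \setminus \{i\}\}$. Splitting $B = B_L \sqcup B_R$, the bundle of $B$ is obtained from the bundles of $B_L$ and $B_R$ in a single ReLU layer via $\max(B) = \max(\max(B_L), \max(B_R))$, together with $L_i^B = \max(L_i^{B_L}, \max(B_R))$ for $i \in B_L$ and $L_i^B = \max(\max(B_L), L_i^{B_R})$ for $i \in B_R$, which is $1 + |B|$ pairwise maxes. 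A straightforward induction on $|B|$ establishes correctness of this merge, and hence that the root bundle contains exactly $L_1, \ldots, L_d$.

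Next I would organize the blocks into a balanced binary tree so that at level $m$ below the root there are $2^m$ blocks, each of size $\zeta(d, m) = \lceil (d-1)/2^m \rceil$. Reindexing layers by $j = \textsc{depth}(d) - m$, the merge count at level $m$ gives the general per-layer width $2^m(1 + \zeta(d, m)) = 2^{\textsc{depth}(d)-j}(1 + \zeta(d, \textsc{depth}(d)-j))$, matching \autoref{eqn:width_condition} for $j \ge 2$. The bottom layer $j = 1$ must be treated separately: there the blocks have size $\zeta(d, \textsc{depth}(d)-1) = 2$, whose leave-one-out components are single coordinates passed through rather than genuine maxes, so only the $2^{\lfloor \log_2(d-2)\rfloor}$ full-block maxes are charged as maxes and the remaining $d-1$ entries as wires, giving $w(d,1) = 2^{\lfloor \log_2(d-2)\rfloor} + (d-1)$.

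The main obstacle --- and the step I would spend the most care on --- is controlling the depth. A balanced tree on all $d$ coordinates has height $\lceil \log_2 d\rceil$, which for $d = 2^k+1$ is one too many. The point is that we never need the global max $\max\{x_1, \ldots, x_d\}$ as an output, and each required $L_i$ is a max over only $d-1$ coordinates; exploiting this, the tree can be arranged with height exactly $\textsc{depth}(d) = \lceil \log_2(d-1)\rceil$, so that each output $L_i$ is realized by a balanced subtree on its $d-1$ arguments. Verifying that the block-size profile $\zeta(d, \cdot)$ is simultaneously achievable with this reduced height for every $d$ is the delicate floor/ceiling bookkeeping; I would check it first on the clean case $d = 2^k + 1$, where $\zeta(d, m) = 2^{k-m}$ and every split is exact, and then handle the rounding for general $d$ through the $\lceil \cdot \rceil$ in $\zeta$ and the $\lfloor \cdot \rfloor$ in $w(d,1)$.
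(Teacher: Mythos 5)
Your overall strategy --- compute all $d$ order-$(d-1)$ subpool maxes with a shared recursive pairwise-max structure, folding the averaging and the coefficients $\beta_0^\star, \beta_{d-1}^\star$ into the free adjacent linear layers --- is the same as the paper's, and your bundle merge $L_i^B = \max(L_i^{B_L}, \max(B_R))$ is a correct way to express reuse. The gap is in the combinatorial core, and it is not mere bookkeeping. Your width count requires, at level $m$, a collection of $2^m$ \emph{disjoint} blocks of size $\zeta(d,m)=\lceil(d-1)/2^m\rceil$ whose bundles merge pairwise up to the $d$ outputs $L_1,\dots,L_d$. Those outputs are leave-one-out maxes of the full $d$-element index set, so the level-$1$ blocks must jointly cover all $d$ coordinates; but for odd $d$ one has $2\zeta(d,1)=d-1<d$, and any genuine partition into two blocks of sizes $a+b=d$ yields bundles totalling $(a+1)+(b+1)=d+2$, one more than the claimed $2(1+\zeta(d,1))=d+1$. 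Worse, in the case $d=2^k+1$ that you call clean, a binary tree of disjoint blocks covering all $d$ leaves has height $k+1$, not $\textsc{depth}(d)=k$. You correctly flag the depth as the main obstacle and propose to exploit that each $L_i$ involves only $d-1$ coordinates, but that remedy abandons the single disjoint block tree on which your entire width count rests, and you defer the verification at exactly this point.

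The paper's construction resolves this with a decomposition your blocks cannot imitate: it takes the $d$ leave-one-out index tuples, each of length $d-1$, and recursively halves them \emph{by position}, so the neurons at layer $j$ compute maxes over \emph{overlapping} value-intervals. For $d=9$ the penultimate layer holds the five leave-one-out maxes of $\{1,\dots,5\}$ and the five of $\{5,\dots,9\}$ --- ten neurons, with coordinate $5$ shared between the two intervals --- and every output is a pairwise max of one neuron from each group. The counting fact that makes the widths come out is \autoref{eqn:reuse_condition}: a common position-slice of length $\ell$ of the $d$ top tuples takes only $\ell+1$ distinct values, namely the leave-one-out subsets of an interval of size $\ell+1$. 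Your disjoint-bundle merge has no analogue of this overlap. Separately, your first-layer accounting (singletons ``passed as wires'') does not describe what happens there: all $w(d,1)$ first-layer neurons are genuine pairwise maxes of the form $\max(x_k,x_{k+1})$ or $\max(x_k,x_{k+2})$, and the saving relative to the generic formula $2^{\textsc{depth}(d)-1}(1+\zeta)$ comes from identical pairs recurring across different slice positions, not from free wires.
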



To make the idea concrete \autoref{fig:layer_bounds} demonstrates this computation in a tabular form for $d =  10$. $d = 10$ is an especially complicated dimensionality, since all $\zeta$ are odd. 

\begin{figure}[!ht]
  \begin{subfigure}[a]{\linewidth}
    \begin{Verbatim}[commandchars=\\\{\}, fontsize=\footnotesize]
    \underline{0  1  2  3  4}    | 4  5  6  7  8     \\  
    \underline{0  1  2  3  4}    | 4  5  6  7     9  \\
    \underline{0  1  2  3  4}    | 4  5  6     8  9  \\
    \underline{0  1  2  3  4}    | 4  5     7  8  9  \\
    \underline{0  1  2  3  4}    | 4     6  7  8  9  \\
    0  1  2  3     5 |    \underline{5  6  7  8  9} \\
    0  1  2     4  5 |    \underline{5  6  7  8  9} \\
    0  1     3  4  5 |    \underline{5  6  7  8  9} \\
    0     2  3  4  5 |    \underline{5  6  7  8  9} \\
       1  2  3  4  5 |    \underline{5  6  7  8  9} \\
    \end{Verbatim}
    \caption{$j = 3, \zeta(10, 3) = 5$: 2 repeated + 10 unique = $2^{4 - 3} \times (5 + 1)$.}
  \end{subfigure}
  \hfill
  \begin{subfigure}[b]{\linewidth}
    \begin{Verbatim}[commandchars=\\\{\}, fontsize=\footnotesize]
    \underline{0  1  2}    | \underline{2  3  4}    | \underline{4  5  6}    | 6  7  8     \\  
    \underline{0  1  2}    | \underline{2  3  4}    | \underline{4  5  6}    | 6  7     9  \\
    \underline{0  1  2}    | \underline{2  3  4}    | \underline{4  5  6}    | 6     8  9  \\
    \underline{0  1  2}    | \underline{2  3  4}    | 4  5     7 |    \underline{7  8  9} \\
    \underline{0  1  2}    | \underline{2  3  4}    | 4     6  7 |    \underline{7  8  9} \\
    \underline{0  1  2}    | 2  3     5 |    \underline{5  6  7} |    \underline{7  8  9} \\
    \underline{0  1  2}    | 2     4  5 |    \underline{5  6  7} |    \underline{7  8  9} \\
    0  1     3 |    \underline{3  4  5} |    \underline{5  6  7} |    \underline{7  8  9} \\
    0     2  3 |    \underline{3  4  5} |    \underline{5  6  7} |    \underline{7  8  9} \\
       1  2  3 |    \underline{3  4  5} |    \underline{5  6  7} |    \underline{7  8  9} \\
    \end{Verbatim}
    \caption{$j = 2, \zeta(10, 2) = 3$: 6 repeated + 10 unique = $2^{4 - 2} \times (3 + 1)$.}
  \end{subfigure}
  \begin{subfigure}[c]{\linewidth}
    \begin{Verbatim}[commandchars=\\\{\}, fontsize=\footnotesize]
    \underline{0  1}    | \underline{1  2}    | \underline{2  3}    | \underline{3  4}    | \underline{4  5}    | \underline{5  6}    | \underline{6  7}    | \underline{7  8}  \\   
    \underline{0  1}    | \underline{1  2}    | \underline{2  3}    | \underline{3  4}    | \underline{4  5}    | \underline{5  6}    | \underline{6  7}    | 7     9  \\   
    \underline{0  1}    | \underline{1  2}    | \underline{2  3}    | \underline{3  4}    | \underline{4  5}    | \underline{5  6}    | 6     8 |    \underline{8  9}   \\   
    \underline{0  1}    | \underline{1  2}    | \underline{2  3}    | \underline{3  4}    | \underline{4  5}    | 5     7 |    \underline{7  8} |    \underline{8  9}   \\   
    \underline{0  1}    | \underline{1  2}    | \underline{2  3}    | \underline{3  4}    | 4     6 |    \underline{6  7} |    \underline{7  8} |    \underline{8  9}   \\   
    \underline{0  1}    | \underline{1  2}    | \underline{2  3}    | 3     5 |    \underline{5  6} |    \underline{6  7} |    \underline{7  8} |    \underline{8  9}   \\    
    \underline{0  1}    | \underline{1  2}    | 2     4 |    \underline{4  5} |    \underline{5  6} |    \underline{6  7} |    \underline{7  8} |    \underline{8  9}   \\    
    \underline{0  1}    | 1     3 |    \underline{3  4} |    \underline{4  5} |    \underline{5  6} |    \underline{6  7} |    \underline{7  8} |    \underline{8  9}   \\     
    0     2 |    \underline{2  3} |    \underline{3  4} |    \underline{4  5} |    \underline{5  6} |    \underline{6  7} |    \underline{7  8} |    \underline{8  9}   \\    
       \underline{1  2} |    \underline{2  3} |    \underline{3  4} |    \underline{4  5} |    \underline{5  6} |    \underline{6  7} |    \underline{7  8} |    \underline{8  9}   \\      
    \end{Verbatim}
  \caption{$j = 1$: 9 repeated + 8 unique = $2^{\log_2 8 } + (10 - 1)$.}
  \end{subfigure}
 \caption{\autoref{thm:d1_width} demonstrated for $d = 10$ (zero-indexed for brevity), for which $\textsc{depth}(d) = 4$. Elements with no commonalities are not underlined. Split points are indicated by vertical lines. 
}
  \label{fig:layer_bounds}
\end{figure}

Note that that this need not be the smallest possible representation of the network, just an upper bound that is not too complex. A simpler, uniform bound is obtained by using the width of the the first layer, which itself has a very simple bound of width $\le 2d - 3$, showing that a $\{0, d-1\}$ estimate is of $O(d \log d)$ space and time complexity. 





So we reach a nuanced but reasonable conclusion: a $\{0, d - 1\}$-estimator gives a principled approximation with bounded error efficiently, and it is possible to improve upon it by including more and higher-order terms into $R$, all the way to an extremely good approximation, albeit at intractable levels of calculation. As an example, in $d = 9$, a $\{0, 8\}$-estimate can be implemented with a network of two hidden layers of size $12$ and $10$ and gives an error of 0.0556, whereas the same scheme applied to a $\{0, 1, 2, 3, 4, 5, 6, 7, 8\}$ estimate requires hidden layers of size $36, 84, 126, 126, 84, 36$ and gives an error of 0.0020. As an indication of the relative complexity of these two networks, we might use the bound on the number of linear regions from \cite{Serra2018}, which gives a value of 819906560 for the former estimator and $5.151431489308274 \times 10^{68}$ for the latter. 



\section{Analysis}
\label{sec:practice}

So far, we have strictly looked at the problem of approximating a max-function in isolation. In this short section, we examine briefly the practical takeaways along two dimensions: generalizability and adversarial robustness. 


\subsection{Generalizability}

Even if refit from scratch, \autoref{thm:main_lower_bound} shows that ReLU-based approximations cannot be a drop-in replacement for max pooling. By itself, however, this says nothing about whether maxpooling is empirically good for accuracy. 
And indeed, since we know that newer image classifiers tend to forego max pooling, we already have convincing evidence that excluding maxpooling need not be bad for accuracy. One useful, but incomplete, approach is to show that not enough max-pooling is surely \emph{bad} for accuracy. 

\autoref{thm:all_my_work} shows that the weight on all order statistics is positive, thus to oversimplify matters, an optimal estimator will act like a linear combination of the max and average of the pool. Since we well understand that average pooling is a poor substitute for max-pooling, we anticipate that a large enough step away from max pooling towards average pooling would be detrimental to performance. One way to interpret this reasoning is from a generalizability perspective: it may be safer to err on the side of less linear activations, that is, max-pooling over a linear + ReLU approximation to it, even if in-sample performance does not alone justify it.

Some interesting empirical work that bears directly on the importance of max pooling comes from \cite{Grunning2022}, who find that \emph{min}-pooling also performs well. This result is likewise rationalized by viewing max, average, and min pooling all as instances of general linear combinations of order statistics: the average of a pool performs poorly compared to more nonlinear pooling methods. 

\subsection{Adversarial robustness}
\newcommand{\lenet}{LeNet}
\newcommand{\resnet}{ResNet}
\newcommand{\mnist}{MNIST}
\newcommand{\cifar}{CIFAR10}

In this section, we investigate the connection between adversarial robustness and max pooling. Our hypothesis is that max pooling can be more robust than strided convolution and ReLU nonlinearity, since genuine max pooling admits only a single direction along which features can change -- the max. ReLU, by contrast, can be moved with only low correlation changes, and a random perturbation will in general change the output. 

Our experiments corroborate this intuition; omitting max pooling results in lower robust accuracy in several different model classes, ranging from simple Convolutional Neural Networks to ResNets. Our experimental approach is to adversarially attack models with and without max pooling. We use the Fast Gradient Sign Method by \cite{fgsm}. Specifically, starting from a model incorporating max pool layers, we replace them strided convolution + ReLU. We examine four different models and report the robust accuracy in \autoref{fig:resnet-experiment-main-text} on the \cifar \ dataset. The Maxpool model is from \cite{Page2018_8} and includes four maxpool layers. The baselines make the following modifications: Conv-Small and Conv-Large replace the max pool layers with a convolutional one, with kernel size one and three, respectively. The Conv-Strided model uses a strided convolution in lieu of the max pool layer and the convolution that precedes it. More details on the models as well as experiments with LeNet architecture can be found in \autoref{sec:appendix:experiments} of the appendix.

\begin{figure}[t]
    \centering
    \includegraphics[width=0.95\textwidth]{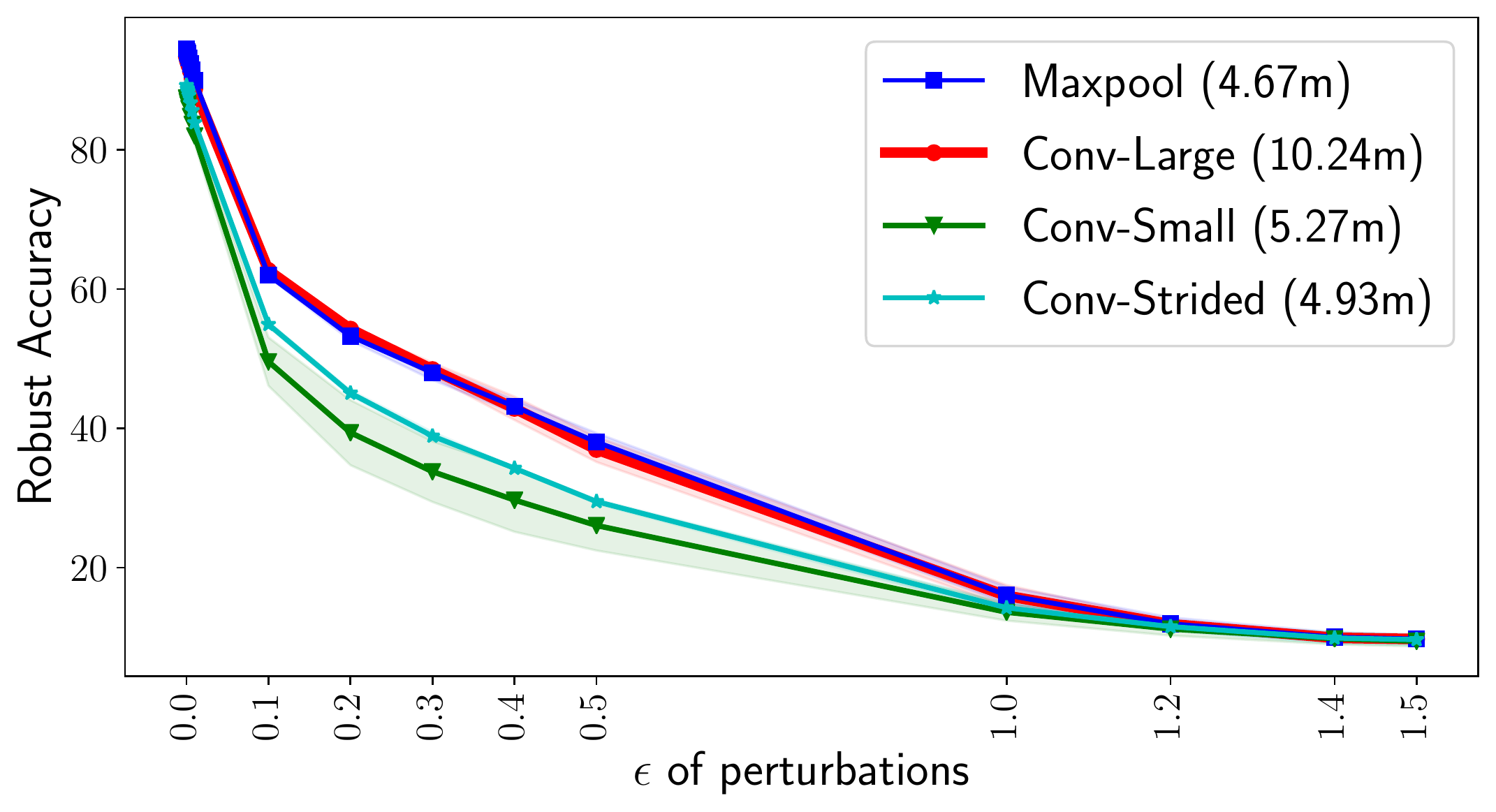}
    \caption{Effect of perturbations with \( L_\infty \leq \epsilon \) on the accuracy of the \resnet \ model variants on the \cifar\ dataset. Mean accuracy and standard deviation over the course of three runs are depicted. The legend indicates the number of parameters of each model in parentheses. The models omitting maxpool pay a price in either (robust) accuracy or model complexity (in terms of parameters).}
    \label{fig:resnet-experiment-main-text}
\end{figure}

The experimental results showcase a tradeoff between model complexity (in terms of number of parameters) and robust accuracy. The Maxpool model is more adversarially robust than the baselines Conv-Small and Conv-Strided. This effect is further highlighted for larger perturbations \( \epsilon \).  An exception to this trend lies in the Conv-Large model which is able to match the robust accuracy of the Maxpool model, but requires more than twice as many parameters.

\section{Conclusion}

We have posed and answered -- theoretically at least -- the question: can max pooling be replaced by linear mappings (such as strided convolutions) composed with ReLU activations, and when do we expect it to be considerably different? To do this, we first established the relevant parameters for comparison: distance in intermediate feature space is the correct notion. Next, we established a baseline: width needs to be constrained in order to address universal approximation theorems, and with a block of $\log_2(d)$ depth, a max pool kernel size of $d$ can be perfectly replicated with a simplistic divide and conquer algorithm. After positing the average of subpool maxes as a useful class of approximations, we gave our main impossibility theorem: with anything less than the na\"ive amount of depth, max pooling cannot be perfectly synthesized from ReLUs. Interestingly, we found that the error of an approximation encompasses the entire range between $O(1/d)$ in the dimension $d$ and $O(1/2^d)$ depending on exactly how much depth is used. To better understand this phenomenon, we analyzed the computational complexity of the approximations and reached an altogether tidy conclusion: to achieve exponentially little error, one needs exponentially much computation. Finally, we examined some practical implications of this analysis, concluding that whilst max pooling can be replaced in image classifiers if accuracy alone is the goal, max pooling may nonetheless be worthwhile if one wants more generally useful networks in the sense of adversarial robustness and generalizability. 

\section{Acknowledgements}
We thank Guillermo Ortiz-Jim\'{e}nez, Suraj Srinivas, Angelos Katharopoulos, and Arnaud Pannatier for helpful feedback. Kyle Matoba is supported by the Swiss National Science Foundation under grant number FNS-188758 ``CORTI''. The work of Nikolaos Dimitriadis was supported by Swisscom (Switzerland) AG.
\bibliography{biblio.bib}

\appendix

\section{In what sense is evaluating $\max(x_1, x_2, x_3, x_4, x_5)$ hard?}
\label{sec:quintic_and_above}

To motivate the fundamental differences of the max function with higher dimensionality, in this short section, we give a short argument for why there is unlikely to be a simple generalization of the correspondence between maximum and ReLU in $d = 2$ to general $d$. The idea is entirely due to \cite{Blatter2011}, though any mistakes in the concrete statement are ours alone. 

\begin{theorem}

There is no algebraic expression for 

\begin{align*}
(x_1, x_2, x_3, x_4, x_5) \mapsto \max(x_1, x_2, x_3, x_4, x_5).
\end{align*}
\end{theorem}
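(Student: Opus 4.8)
The plan is to identify $\max(x_1, \ldots, x_5)$ with a root of the generic quintic and then invoke the Abel--Ruffini theorem. First I would observe that $x_1, \ldots, x_5$ are precisely the roots of the monic polynomial
\[
p(T) = \prod_{i=1}^5 (T - x_i) = T^5 - e_1 T^4 + e_2 T^3 - e_3 T^2 + e_4 T - e_5,
\]
whose coefficients are, up to sign, the elementary symmetric polynomials $e_1, \ldots, e_5$ in the $x_i$. Since $\max(x_1, \ldots, x_5)$ is the largest of these roots, it is in particular a root of $p$, and as a function it is symmetric in its arguments.

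Next I would pin down what ``algebraic expression'' should mean: a formula built from $x_1, \ldots, x_5$ and constants using the four field operations together with radicals $\sqrt[n]{\cdot}$. Because $\max$ is symmetric, realizing it by such a formula is tantamount to exhibiting a root of $p$ by radicals in terms of its coefficients $e_1, \ldots, e_5$. The generic quintic $p$ has Galois group $S_5$ over the field $\mathbb{Q}(e_1, \ldots, e_5)$ of rational functions in its coefficients; since $S_5$ is not solvable, $p$ is not solvable by radicals, and hence none of its roots --- in particular the largest one, $\max$ --- admits a radical expression in the $e_i$. Assuming such an expression existed would therefore contradict Abel--Ruffini, which closes the argument. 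It is instructive to contrast $d = 2$, where $\max(a, b) = \tfrac12\bigl(a + b + \sqrt{(a-b)^2}\bigr)$ succeeds precisely because the relevant group $S_2$ is (trivially) solvable, the single radical performing the branch selection that realizes $|a - b|$ on the reals.

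The main obstacle --- and the reason this argument is properly heuristic rather than fully rigorous --- is exactly the interplay between real branch selection and algebraic solvability. Over $\mathbb{R}$ a radical such as $\sqrt{(a-b)^2}$ is algebraically just $\pm(a-b)$, already lying in the base field $\mathbb{R}(a,b)$; its usefulness comes entirely from choosing the nonnegative branch on the real domain, which is how it encodes an absolute value. To upgrade the plan to a theorem one must argue that the branch-switching a radical expression can perform across the walls $\{x_i = x_j\}$ is controlled by the monodromy of the associated radical tower, and that correctly selecting the largest among five arguments would force this solvable group to realize the full $S_5$ action permuting the roots, which is impossible. Managing this branch-selection bookkeeping cleanly, rather than the invocation of Abel--Ruffini itself, is the delicate step, and it is the point at which I would be content, following \cite{Blatter2011}, to present the reasoning as a motivating heuristic.
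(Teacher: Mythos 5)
Your proposal rests on the same core observation as the paper's proof---$\max(x_1,\ldots,x_5)$ is a root of $\prod_{i=1}^5 (T - x_i)$, so a radical expression for it would collide with Abel--Ruffini---and, like the paper, you present the argument as a heuristic rather than a theorem. The two differ in how Abel's theorem is reached. The paper deflates: given an algebraic expression $f$ for the largest root, it divides the quintic by $(T - f)$ and solves the remaining quartic by radicals, thereby obtaining radical expressions for \emph{all} five roots of a general quintic, which is the classical form of the contradiction. You go directly: since $\max$ is symmetric, a radical formula for it in the $x_i$ should descend to a radical formula for a root of the generic quintic in its coefficients $e_1,\ldots,e_5$, and the non-solvability of $S_5$ already forbids even one root from being radical over $\mathbb{Q}(e_1,\ldots,e_5)$---so no deflation step is needed. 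Your route is arguably cleaner, and you are more explicit than the paper about the gap that both arguments share: ``algebraic expression in $x_1,\ldots,x_5$'' is delicate because each $x_i$ is trivially rational in the $x_i$, so the entire content lies in the branch selection that real radicals perform (as in $\sqrt{(a-b)^2} = |a-b|$), and the descent from a symmetric radical expression in the roots to a radical expression in the coefficients is asserted rather than proved in both treatments. Naming that as the step requiring monodromy bookkeeping is a more honest account of where the heuristic would need shoring up than the paper provides.
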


\begin{proof}
$\max(x_1, x_2, x_3, x_4, x_5)$ is a root of the polynomial $(x - x_1)(x - x_2)(x - x_3)(x - x_4)(x - x_5)$. Suppose that there was an algebraic expression for the largest of $(x_1, x_2, x_3, x_4, x_5)$, say $f(x_1, x_2, x_3, x_4, x_5)$ then, then the roots of $(x - x_1)(x - x_2)(x - x_3)(x - x_4)(x - x_5) / (x - f(x_1, x_2, x_3, x_4, x_5))$ could be found via the quartic equation, and we would have an algebraic expression for all five roots. However, Abel's Theorem states that there is no algebraic expression for general quintic polynomials.

\end{proof}
As mentioned, this technique was proposed by \cite{Blatter2011}. The argument is a bit subtle, so to better understand it, consider the same argument applied to the maximum of two values. The two roots of $(x - x_1)(x- x_2)$ are well known to be $(x_1 + x_2) /2  \pm \sqrt{(x_1 + x_2)^2 - 4x_1x_2} / 2$, and by inspection the larger corresponds to adding the discriminant:

\begin{align*}
\frac{(x_1 + x_2) + \sqrt{(x_1 + x_2)^2 - 4x_1x_2}}{2} = \frac{(x_1 + x_2) + \sqrt{(x_1 - x_2)^2}}{2} = \frac{(x_1 + x_2) + |x_1 - x_2|}{2}. 
\end{align*}

Which is a well-known trick for reasoning mathematically about the maximum of two variables. A corresponding expression comes from solving the cubic equation

\begin{align*}
\max(x_1, x_2, x_3) &= \frac{1}{2}\frac{x_1( |x_1 - x_2| + |x_1 - x_3|) + x_2(|x_1 - x_2| + |x_2 - x_3|) + x_3(|x_2 - x_3| + |x_1 - x_3|)}{|x_1 - x_2| + |x_2 - x_3| + |x_1 - x_3|} \\
& + \frac{|x_1 - x_2| + |x_2 - x_3| + |x_1 - x_3|}{4}.
\end{align*}

From examining the form above, we see that it is tractable because we can assess the min and max similar to above, and impute the third value from the average.\footnote{See also the excellent exposition given at \url{https://math.stackexchange.com/a/89702/92999}.} This gives a rudimentary version of \autoref{thm:subpool_max_sums}.

Presumably there is an even more complicated formula for $\max(x_1, x_2, x_3, x_4)$. However, for fifth and higher-order polynomials we cannot generally even write down an algebraic expression for the roots, much less determine by inspection which will be the greatest. 

\section{Proofs}
\label{sec:proofs}
\subsection{Proof of \autoref{thm:min_approx_error}}

\begin{proof}
For an input $x \in \bbR^k$, let $h_1(x) = (x_1, \hdots, x_{\lceil k / 2\rceil})$ and $h_2(x) = (x_{\lfloor k / 2 \rfloor + 1}, \hdots, x_k)$ be functions that extract, respectively, the first and second halves of $x$.\footnote{Here and throughout, we split inputs at their midpoint, but this could be replaced by any function that extracts half of the input, say odd and even indices, so long as the corner cases were adapted concomitantly.} Let $H$ be the linear mapping $\bbR^{d} \mapsto \bbR^{4 \lceil d / 2 \rceil}$,

\begin{align*}
H(x) = \begin{pmatrix} 
+h_1(x) - h_2(x) \\
-h_1(x) + h_2(x) \\
+h_1(x) + h_2(x) \\
-h_1(x) - h_2(x)
\end{pmatrix}.
\end{align*}
So, letting $A(x) = ((x_1, \hdots, x_{k / 4}) + (x_{k / 4 +1}, \hdots, x_{k / 2}) + (x_{k / 2 + 1}, \hdots, x_{3k / 4}) - (x_{3k / 4 + 1}, \hdots, x_k)) / 2$ from dimension $d$ to $d / 4$, $\max(h_1(x), h_2(x)) = A(\relu(H(x))$ is the composition of linear and ReLU functions. With these two linear mappings, we have that 

\begin{align*}
\textsc{max} = A \circ \relu \circ H \circ A \circ \relu \circ H \circ A \circ \hdots \circ H \circ A \circ \relu \circ H \circ P
\end{align*}

where $P$ is a function that pads the input size up to $2^{\lceil \log_2(d - 1) \rceil}$, say by appending $-\infty$. 
\end{proof}

\subsection{Proof of \autoref{thm:main_theorem_width}}
\begin{proof}

Let $\textnormal{perm}(d)$ denote the set of all permutations of $\{1, 2, \hdots, d\}$, and let

\begin{align*}
\calM_d^{s}(R) = \{m \in \calM_d(R) : m(x_1, x_2, \hdots, x_d) = m(x_{\sigma_1}, x_{\sigma_2}, \hdots, x_{\sigma_d}) \textnormal{ for all } \sigma \in \textnormal{perm}(d) \}
\end{align*}

denote the restriction of $\calM_d$ to those elements that are invariant to a reordering of its arguments. Because $\textsc{max}$ is symmetric in this sense, any optimal approximation to it must lie in $\calM_d^s$: 

\begin{lemma}
\label{lemma:permutation_invariance}
For all $R$, 
\begin{align*}
\min_{m \in \calM_d(R)} || m - \textrm{max}_d ||_\infty = \min_{m \in \calM^s_d(R)} || m - \textrm{max}_d ||_\infty.
\end{align*}
\end{lemma}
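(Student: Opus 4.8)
The plan is a standard symmetrization argument: average an arbitrary approximant over the group of coordinate permutations. Since $\calM^s_d(R) \subseteq \calM_d(R)$, the inequality $\min_{\calM_d(R)} \le \min_{\calM^s_d(R)}$ is immediate (a larger feasible set can only lower the minimum), so all the content lies in the reverse direction: showing that every $m \in \calM_d(R)$ can be replaced by a \emph{symmetric} element of $\calM_d(R)$ whose $L_\infty$ error is no larger.

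First I would record two structural facts about $\calM_d(R)$. (i) It is a linear space of functions: the coefficients $\beta_0$ and $\beta_r^j$ in \autoref{eqn:affine_combinations} range freely over $\bbR$, so $\calM_d(R)$ is closed under arbitrary linear combinations, in particular under averaging. (ii) It is closed under the coordinate-permutation action. Concretely, for $\sigma \in \mathrm{perm}(d)$ write $x_\sigma = (x_{\sigma_1}, \hdots, x_{\sigma_d})$; then $s(x_\sigma; C(j,r,d)) = s(x; \sigma(C(j,r,d)))$, and $\sigma(C(j,r,d))$ is again a size-$r$ subset, hence equals $C(j',r,d)$ for some $j'$. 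Thus if $m \in \calM_d(R)$ then $x \mapsto m(x_\sigma)$ is also in $\calM_d(R)$: it is the same affine combination with its coefficients relabeled by $\sigma$, and crucially the order $r$ of each term is unchanged, so no terms outside $R$ are introduced.

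Given these, I would define the symmetrization $\bar m(x) = \frac{1}{d!} \sum_{\sigma \in \mathrm{perm}(d)} m(x_\sigma)$. By fact (ii) each summand lies in $\calM_d(R)$ and by fact (i) so does the average; moreover $\bar m$ is by construction invariant under every permutation of its arguments, so $\bar m \in \calM^s_d(R)$. To bound its error I would use that $\textsc{max}$ is permutation-invariant, $\textsc{max}(x_\sigma) = \textsc{max}(x)$, so for every $x$ in the unit cube,
\[
|\bar m(x) - \textsc{max}(x)| = \Bigl| \tfrac{1}{d!} \sum_\sigma \bigl(m(x_\sigma) - \textsc{max}(x_\sigma)\bigr) \Bigr| \le \tfrac{1}{d!} \sum_\sigma |m(x_\sigma) - \textsc{max}(x_\sigma)|.
\]
Since $[0,1]^d$ is itself permutation-invariant, as $x$ ranges over it so does $x_\sigma$, whence each term is at most $\|m - \textsc{max}\|_\infty$; taking the supremum over $x$ gives $\|\bar m - \textsc{max}\|_\infty \le \|m - \textsc{max}\|_\infty$. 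Taking the infimum over $m \in \calM_d(R)$ then yields $\min_{\calM^s_d(R)} \le \min_{\calM_d(R)}$, which together with the trivial direction gives the claimed equality.

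I do not expect a genuine obstacle here; the only points needing care are the two closure properties in step one — in particular verifying that permuting coordinates sends each generating subpool max to a subpool max of the \emph{same} order, so that $\calM_d(R)$ (and not some larger class) is preserved — and the observation that a best $L_\infty$ approximant from the finite-dimensional space $\calM_d(R)$ over the compact cube in fact exists, so that the minima are attained. If one prefers to sidestep attainment, the identical inequality chain applied to an $\epsilon$-optimal $m$, followed by $\epsilon \to 0$, works verbatim.
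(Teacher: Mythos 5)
Your proof is correct and rests on the same key idea as the paper's: symmetrizing an arbitrary approximant by averaging over the permutation group and exploiting the permutation-invariance of $\textsc{max}$ and of the unit cube. The paper phrases this as a rather terse proof by contradiction and never verifies that the averaged function still lies in $\calM_d(R)$; your closure facts (i) and (ii) supply exactly that missing step, so your direct version is the more complete of the two.
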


\begin{proof}
Assume otherwise, that is:
\begin{align*}
\min_{m \in \calM_d(R)} || m - \textsc{max} ||_\infty < \min_{m \in \calM^s_d(R)} || m - \textsc{max} ||_\infty.
\end{align*}

Meaning that there is an $m$ that is not symmetric and an $x$ such that $m(x') < m^s(x)$ for all symmetric $m^s$ and all $x'$. In particular $m(x_\sigma) < m^s(x)$ for all permutations $x_\sigma$ of $x$. Thus

\begin{align*}
\frac{1}{d!} \sum_{\sigma \in \textnormal{perm}(x)} m(x_\sigma') < m^s(x).
\end{align*}

However, $x\mapsto \frac{1}{d!} \sum_{\sigma \in \textnormal{perm}(x)} m(x_\sigma') < m^s(x)$ is evidently symmetric, a contradiction. 

\end{proof}

Thus, it is without loss of generality to optimize over $\calM^s$ rather than $\calM$, and we turn to operationalizing the symmetry assumption in terms of the coefficients. 

For $r > 1$, $s(1_d - e_{dk}; C(j, r, d)) = 1$ for all $k =  1, 2, \hdots, d$ and all $j = 1, \hdots, {d \choose r}$ since there is only a single non-one value. Thus all terms of order greater than 1 are equal, and the sums are equal by symmetry, so we necessarily have that for all $f \in \calM_d^s$, $\beta_1^1 = \beta_1^2 = \hdots = \beta_1^d$. Call this single value $\beta_1$, and (for $0, 1 \in R$)

\begin{align*}
\calM_d^s(R) = \left\{ x \mapsto \beta_0 + \beta_1 S(x; 1, d) + \sum_{r \in R \backslash \{0, 1\}} \sum_{j=1}^{{d \choose r}} \beta_r^j s(x; C(j, r, d)) : \beta_r^j, \beta_0, \beta_1 \in \bbR \right\}.
\end{align*}

Repeating this process for pools consisting of entirely of 1, except for $2, 3, \hdots, d- 1$ zeros in turn implies that the estimator must be a function of $S(x; r, d)$ alone, and not the individual terms of the sum separately:

\begin{align*}
\calM_d^s(R) = \left\{ x \mapsto \beta_0 + \sum_{r \in R \backslash \{0\}} \beta_r S(x; r, d) : \beta_r, \beta_0 \in \bbR \right\}.
\end{align*}

We want to show that 
\begin{align}
\label{eqn:equation1}
\min_{\beta_{d-1}} ||x_{(1)} - \beta_{d-1} S(x; d - 1, d) ||_\infty &= 1 / (2d - 1) \\
\label{eqn:equation2}
\min_{\beta_0, \beta_{d-1}} ||x_{(1)} - \beta_0 - \beta_{d-1} S(x; d - 1, d) ||_\infty &= 1 / (2d) \\
\label{eqn:equation3}
\min_{\beta_0, \beta_1, \hdots, \beta_{d-1}} || x_{(1)} - \beta_0 - \beta_1 S(x; 1, d) - \hdots - \beta_{d-1} S(x; d-1, d)||_\infty &\le 1 / 2^d
\end{align}

where the $L_\infty$ norm is taken over values of $x$. For \autoref{eqn:equation1}, we first show a lower bound:

\begin{align*}
\min_{\beta_{d-1}} ||x_{(1)} - \beta_{d-1} S(x; d - 1, d) ||_\infty &\ge \\
\min_{\beta_{d-1}} \max\{|x_{(1)} - \beta_{d-1} S((1, 0, \hdots, 0); d - 1, d)|, |x_{(1)} - \beta_{d-1} S((1, 1, \hdots, 1); d - 1, d)| \} &= \\
\min_{\beta_{d-1}} \max\{|1 - \beta_{d-1}(d - 1) / d|, |1 - \beta_{d-1}|\}.
\end{align*}


Evidently $\beta_{d-1} < 0$ cannot be optimal, thus $1 - \beta_{d-1}(d - 1)/ d > 1 - \beta_{d-1}$, and the optimal $\beta_{d-1}$ will be such that $1 - \beta_{d-1}(d - 1)/d > 0 > 1 - \beta_{d-1}$. To minimize the max of the two terms, we should equate the distance that $1 - \beta_{d-1}(d - 1)/d$ is above zero with the distance that $1 - \beta_{d-1}$ is below zero. I.e. 

\begin{align*}
1 - \beta_{d-1}(d - 1)/d = \beta_{d-1} - 1 \iff \beta_{d-1} = 2d / (2d - 1). 
\end{align*}

We show that this lower bound is tight by contradiction. Suppose that there is some $\beta'_{d-1}$ that achieves a criterion strictly less than the $1 / (2d - 1)$ achieved by $\beta_{d-1} = 2d / (2d - 1)$. This would imply that 

\begin{align*}
&-1 / (2d - 1) < 1 - \beta'_{d-1} \iff \beta'_{d-1} < 2d / (2d - 1) \textnormal{ and } \\
& 1 - \beta'_{d-1}\frac{d - 1}{d} < +1 / (2d - 1) 
\iff \frac{2d}{ 2d - 1} < \beta'_{d-1}.
\end{align*}

The proof of \autoref{eqn:equation2} is clearly quite similar:

\begin{align*}
\min_{\beta_0, \beta_{d-1}} ||x_{(1)} - \beta_0 - \beta_{d-1} S(x; d - 1, d) ||_\infty &\ge \\
\min_{\beta_0, \beta_{d-1}} \max\{|x_{(1)} - \beta_0 - \beta_{d-1} S((0, 0, \hdots, 0); d - 1, d)|, \\
                                  |x_{(1)} - \beta_0 - \beta_{d-1} S((1, 0, \hdots, 0); d - 1, d)|, \\
                                  |x_{(1)} - \beta_0 - \beta_{d-1} S((1, 1, \hdots, 1); d - 1, d)| \} &= \\
\min_{\beta_0, \beta_{d-1}} \max\{|- \beta_0|, |1 - \beta_0 - \beta_{d-1}(d - 1)/d|, |1 - \beta_0 - \beta_{d-1}|\} 
\end{align*}

Clearly, the optimal $\beta_0$ and $\beta_1$ are positive. Thus, we conjecture that optimality is characterized by the fitted value at $(1, 1, 0, \hdots, 0)$ being as positive in magnitude as the one at $(1, 0, \hdots, 0)$ is negative:

\begin{align*}
1 - \beta_0 - \beta_{d-1}(d - 1)/d = \beta_0 + \beta_{d-1} -1  \iff \\
2 = 2\beta_0 + \beta_{d-1}(2d - 1)/d  \iff \\
1 = \beta_0 + \beta_{d-1}(2d - 1)/(2d)  
\end{align*}

We further conjecture that the common value should also be equated to $\beta_0$, or 

\begin{align*}
1 - \beta_0 - \beta_{d-1} = + \beta_{d-1}/(2d) = \beta_0 \iff \\
1 = \beta_0 + \beta_0 (2d - 1) = 2d \beta_0 \iff \beta_0 = 1 / (2d).
\end{align*}

Thus, $(\beta_0, \beta_{d-1})= (1 / (2d), 1)$ , and $\beta_0$ is a lower bound on the criterion. As above, we show that the lower bound is tight by contradiction: assume that there are some $(\beta_0', \beta_{d-1}')$ that achieves a criterion $< 1 / (2d)$. Then (evaluating the error at 0) $1 - \beta_0' = 1 - 1 / (2d) + \varepsilon$ for some $\varepsilon > 0$. This implies that the criterion at $(1, 1, 0, \hdots, 0)$ satisfies:

\begin{align}
\label{eqn:1100}
-1 / (2d) < 1 - \beta_0' - \beta_{d-1}' = 1 - 1 / (2d) + \varepsilon- \beta_{d-1}' \iff \beta_{d-1}' < 1 + \varepsilon.
\end{align}

Whilst evaluating the criterion at $(1, 0, 0, \hdots, 0)$ implies that 

\begin{equation}
\begin{aligned}
\label{eqn:1000}
1 - \beta_0' - \beta_{d-1}' (d - 1) / d < +1 / (2d) \iff & \\
1 + \varepsilon - \beta_{d-1}' (d - 1) / d < 1 / d \iff & \\
(d - 1) / d + \varepsilon < \beta_{d-1}' (d - 1) / d \iff & \\
1 + \varepsilon \times d / (d - 1) < \beta_{d-1}'.
\end{aligned}
\end{equation}

\autoref{eqn:1100} and \autoref{eqn:1000} are clearly incompatible, thus the lower bound is tight.

\autoref{eqn:equation3} will be shown if we demonstrate a $\beta_0, \beta = (\beta_1, \hdots, \beta_{d-1})$ that achieves an error of $1 / 2^d$. Let $B(d)$ be the $d - 1\times d$ upper-diagonal matrix with $(r, c)$th element ${d - c \choose r - 1} / {d \choose r}$ if $r + c \le d + 1$, and zero otherwise.  We can write the condition \autoref{eqn:Sxrd} simultaneously for all $r$ as 

\begin{align*}
S(x; d) \triangleq \begin{pmatrix} S(x; 1, d) \\ S(x; 2, d) \\ S(x; 3, d) \\ \vdots \\ S(x; d-1, d) \end{pmatrix} = B(d) \begin{pmatrix}  
x_{(1)} \\ x_{(2)} \\ \vdots \\ x_{(d)}
\end{pmatrix}.
\end{align*}

\begin{align*}
x_{(1)} - \beta_0 - \beta^\top S(x; d) = x_{(1)} - \beta_0 - \beta^\top B(d) \begin{pmatrix} x_{(1)} \\ x_{(2)} \\ \vdots \\ x_{(d)}\end{pmatrix}.
\end{align*}

In this more concise notation, we need to compute a $\beta_0, \beta$ such that:

\begin{align*}
\max_{x \in [0, 1]^d} |x_{(1)} - \beta_0 - \beta^\top S(x; d)| = 1 / 2^d.
\end{align*}
The space of order statistics is difficult to compute with, so we exchange the $d$ inequalities that $1 \ge x_{(1)} \ge x_{(2)} \ge \hdots \ge x_{(d-1)} \ge x_{(d)} \ge 0$, for the single constraint that the variables sum to one via the \emph{V representation}: $\left\{y: y \in [0, 1]^d, y_i \ge y_{i+1} \right\} = \left\{V(d)\lambda : \lambda \in \Delta_d \right\}$, where $V(d)$ is the $d \times d + 1$ matrix 

\begin{align}
\label{eqn:Vd}
\begin{pmatrix} 
0 & 1 & 1 &\hdots & 1 &1 \\
0 & 0 & 1 &\hdots & 1 &1 \\
\vdots & \hdots & & \vdots \\
0 & 0 & 0 &\hdots & 1 &1 \\
0 & 0 & 0 &\hdots & 0 &1
\end{pmatrix}.
\end{align}

In this formulation, $x_{(1)} = 1 - \lambda_1 = \lambda_2 + \hdots + \lambda_{d+1}$, and the error as a function of $\beta_0, \beta = \begin{pmatrix}\beta_1 & \hdots & \beta_{d-1}\end{pmatrix}$ is $\max_{\lambda \in \Delta_{d}} | L(\beta) \lambda - \beta_0|$ where $L(\beta) = (\begin{pmatrix} 0 & 1 & \hdots 1 \end{pmatrix} - \beta^\top B(d) V(d))$.

This $\beta$ is given by $\beta^\star_0 = 1 / 2^{d} $, and 
\begin{align*}
 \beta^\star = -1 \times
\begin{pmatrix} 
(-1 / 2)^{d-1} {d \choose 1} \\ (-1 / 2)^{d-2} {d \choose 2}\\ \vdots \\ (-1/2)^1 {d \choose d}
\end{pmatrix},
\end{align*}
For such a $\beta^\star$, we have that

\begin{align*}
\beta^{\star \top} B(d) = \begin{pmatrix} 1 - 1/2^d & +1/2^d & -1/2^d & \hdots & (-1)^d 1/2^d \end{pmatrix}.
\end{align*}

Then $\beta^{\star \top} B(d)V(d)$ is the $1 \times d + 1$ vector starting with a zero, then a 1, and thereafter followed by alternating values of $1 - 1 / 2^d$ and $1$. Thus, 
\begin{align*}
\beta^\top S(x; d) = \begin{cases}
(\lambda_2 + \lambda_4 + \hdots + \lambda_{d+1}) + (1 - 1 / 2^d) (\lambda_3 + \lambda_5 + \hdots + \lambda_{d}) & \textnormal{ if } d + 1 \textnormal{ is even } \\
 (\lambda_2 + \lambda_4 + \hdots + \lambda_{d}) + (1 - 1 / 2^d) (\lambda_3 + \lambda_5 + \hdots + \lambda_{d+1})& \textnormal{ otherwise.}
\end{cases}
\end{align*}

\begin{equation}
\begin{aligned}
\label{eqn:lambda_weights}
& \max_{x \in [0, 1]^d} |x_{(1)} - \beta_0^\star - \beta^{\star \top} S(x; d)| \\
=& \max_{\lambda \in [0, 1]^{d+1}, 1_{d+1}^\top \lambda = 1} | (1 - \lambda_1) - \beta_0^\star - \beta^{\star \top} B(d) V(d) \lambda |  \\
=& \max_{\lambda \in [0, 1]^{d+1}, 1_{d+1}^\top \lambda = 1} | (\lambda_2 + \hdots + \lambda_{d+1}) - \beta_0 - (\lambda_2 + \lambda_4 + \hdots ) - (1 - 1 / 2^d) (\lambda_3 + \lambda_5 + \hdots)| \\
=& \max_{\lambda \in [0, 1]^{d+1}, 1_{d+1}^\top \lambda = 1} | (1 / 2^d) (\lambda_3 + \lambda_5 + \hdots)- 1 / 2^d |
\end{aligned}
\end{equation}

where in the last we have substituted the actual value of $\beta_0^\star$. For all $\lambda$ the first term is bounded between $0$ and $1 / 2^d$, thus setting $\lambda_3 = \lambda_5  = \hdots = 0$ maximizes the value and achieves a value of $1 / 2^d$. 

\end{proof}

\begin{lemma}[Fundamental perturbation analysis of $L_\infty$ optimization]
\label{thm:perturbation_analysis}
Given a vector $\gamma \in \bbR^{d}$:

\begin{align}
\label{eqn:obvious}
||\gamma||_\infty &= \max_{\lambda \in \Delta_d} |\gamma^\top \lambda| \\
\label{eqn:first}
(\gamma_{(1)} + \gamma_{(d)})/2 &= \argmin_{a \in \bbR} \max_{\lambda \in \Delta_d} |\gamma^\top \lambda - a| \\
\label{eqn:second}
(\gamma_{(1)} - \gamma_{(d)})/2 &= \min_{a \in \bbR} \max_{\lambda \in \Delta_d} |\gamma^\top \lambda - a|
\end{align}
\end{lemma}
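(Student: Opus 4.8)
The plan is to establish \autoref{eqn:obvious} from first principles and then bootstrap \autoref{eqn:first} and \autoref{eqn:second} from it by an affine reparametrization, reducing everything to a one-dimensional Chebyshev-centering problem.

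First I would prove \autoref{eqn:obvious}. Since $\lambda \in \Delta_d$ has nonnegative entries summing to one, $\gamma^\top \lambda = \sum_i \gamma_i \lambda_i$ is a convex combination of the coordinates of $\gamma$, so $\gamma_{(d)} \le \gamma^\top \lambda \le \gamma_{(1)}$. Taking $\lambda$ to be the standard basis vector supported on an index achieving the largest (respectively smallest) coordinate shows both endpoints are attained, whence $\{\gamma^\top \lambda : \lambda \in \Delta_d\} = [\gamma_{(d)}, \gamma_{(1)}]$. Maximizing $|\cdot|$ over this interval yields $\max(|\gamma_{(1)}|, |\gamma_{(d)}|) = \max_i |\gamma_i| = \|\gamma\|_\infty$.

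Next, for \autoref{eqn:first} and \autoref{eqn:second}, I would exploit that $1_d^\top \lambda = 1$ on the simplex to write $\gamma^\top \lambda - a = (\gamma - a 1_d)^\top \lambda$, which is legitimate precisely because $\lambda$ sums to one. Applying \autoref{eqn:obvious} to the vector $\gamma - a 1_d$ then collapses the inner maximization,
\begin{align*}
\max_{\lambda \in \Delta_d} |\gamma^\top \lambda - a| = \|\gamma - a 1_d\|_\infty = \max_i |\gamma_i - a|,
\end{align*}
so the outer problem reduces to choosing $a$ to minimize the largest deviation from the numbers $\gamma_1, \hdots, \gamma_d$.

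Finally, I would solve $\min_a \max_i |\gamma_i - a|$ in one dimension. For a lower bound, the triangle inequality gives $|\gamma_{(1)} - a| + |\gamma_{(d)} - a| \ge \gamma_{(1)} - \gamma_{(d)}$ for every $a$, hence $\max_i |\gamma_i - a| \ge \max(|\gamma_{(1)}-a|, |\gamma_{(d)}-a|) \ge (\gamma_{(1)} - \gamma_{(d)})/2$. For the matching upper bound, I would check that $a^\star = (\gamma_{(1)} + \gamma_{(d)})/2$ attains it: there $|\gamma_{(1)} - a^\star| = |\gamma_{(d)} - a^\star| = (\gamma_{(1)} - \gamma_{(d)})/2$, and since every $\gamma_i$ lies in $[\gamma_{(d)}, \gamma_{(1)}]$ we also have $|\gamma_i - a^\star| \le (\gamma_{(1)} - \gamma_{(d)})/2$. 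This simultaneously pins the optimal value at $(\gamma_{(1)} - \gamma_{(d)})/2$, proving \autoref{eqn:second}, and identifies the minimizer $a^\star$, proving \autoref{eqn:first}. There is no deep obstacle here; the only step requiring care is the reduction, namely that the shift by $a 1_d$ is valid only because $\lambda \in \Delta_d$, after which \autoref{eqn:obvious} does all the work and the remainder is the standard fact that the Chebyshev center of a finite set of reals is the midpoint of its range.
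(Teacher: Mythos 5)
Your proposal is correct and follows essentially the same route as the paper: both reduce via $\gamma^\top\lambda - a = (\gamma - a 1_d)^\top\lambda$ (valid because $\lambda$ sums to one) and then balance the extreme coordinates at $a^\star = (\gamma_{(1)}+\gamma_{(d)})/2$. Your treatment of the final one-dimensional step is somewhat more rigorous — you supply the triangle-inequality lower bound explicitly where the paper appeals to an informal ``optimality principle'' — but the underlying argument is the same.
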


\begin{proof}
%
\autoref{eqn:obvious} is completely evident, we state it just to introduce the notation and because it will be used subsequently. For $\lambda \in \Delta_d$, 
\begin{align*}
|\gamma^\top \lambda - a| = |(\gamma - 1_d \times a)^\top \lambda|
\end{align*}
and \autoref{eqn:first} follows from the optimality principle that the optimal $a$ above should set the largest value of $\gamma - 1_d \times a$ to be as far above zero as the smallest value is below zero. If this condition is not met, then the criterion can be further reduced by moving the more extreme value towards zero without changing the index of the argmin.

This is achieved by setting $a = (\gamma_{(1)} + \gamma_{(d)})/2$ so that the largest value of $\gamma - 1_d \times a$ is $(\gamma_{(1)} - \gamma_{(d)})/2$, and the smallest value is $(\gamma_{(d)} - \gamma_{(1)})/2$, with all other values in-between. This will be the achieved criterion, hence \autoref{eqn:second}. 

%
\end{proof}

\begin{theorem}
\label{thm:all_my_work}
Let $L(\beta) = (\begin{pmatrix} 0 & 1 & \hdots 1 \end{pmatrix} - \beta^\top B(d) V(d))$, then an optimal $\beta_0^\star, \beta^\star$ satisfies
\begin{align}
\label{eqn:amdw_assertion}
\beta^\star = \argmin_{\beta } \max_i \ L(\beta)_i. 
\end{align}
And $L(\beta^\star) \ge 0, \beta_0^\star = \max_i \ L(\beta^\star)_i$.
\end{theorem}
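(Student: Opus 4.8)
The plan is to eliminate the intercept analytically, reducing the joint problem over $(\beta_0, \beta)$ to a pure minimization over the slope $\beta$, and then to read the three assertions off the geometry of the reduced problem. First I would recall from the proof of \autoref{thm:main_theorem_width} that the approximation error equals $\max_{\lambda \in \Delta_d} |L(\beta)\lambda - \beta_0|$, and that as $\lambda$ ranges over the simplex the scalar $L(\beta)\lambda$ sweeps out the interval $[\min_i L(\beta)_i, \max_i L(\beta)_i]$, the extreme values being attained at the vertices $e_j$. Applying \autoref{thm:perturbation_analysis} with $\gamma = L(\beta)^\top$ then shows that, for each fixed $\beta$, the optimal intercept is the midpoint $\beta_0 = \tfrac12(\max_i L(\beta)_i + \min_i L(\beta)_i)$ and the induced error is $\tfrac12(\max_i L(\beta)_i - \min_i L(\beta)_i)$. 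Hence the outer optimization is precisely to minimize the \emph{range} of the entries of $L(\beta)$.

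The second ingredient is a structural observation: $L(\beta)_1 = 0$ for every $\beta$. The first column of $V(d)$ in \autoref{eqn:Vd} is the zero vector and the fixed row $(0,1,\hdots,1)$ vanishes in its first coordinate, so no choice of $\beta$ can move this entry; concretely it is the error of the slope-only estimator at the all-zeros vertex, where the max and every subpool max vanish. It follows that $\min_i L(\beta)_i \le 0 \le \max_i L(\beta)_i$ for all $\beta$, and that on the feasible region $\{\beta : L(\beta) \ge 0\}$ the minimum is pinned to $0$ at coordinate $i=1$, so there the range equals $\max_i L(\beta)_i$.

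The heart of the argument, and the step I expect to be the main obstacle, is to show that a range-minimizing $\beta^\star$ actually satisfies $L(\beta^\star) \ge 0$. I would argue this by perturbation and contradiction: writing $L(\beta)_{k+1} = 1 - \sum_{i \le k} (\beta^\top B(d))_i$ (the excess of the true max over the slope estimator at the vertex with $k$ ones), a strictly negative entry forces $\min_i L(\beta^\star)_i < 0$, and then one adjusts the order-statistic loadings $\beta^\top B(d)$ so as to lift the offending entries back toward $0$ while holding the rigid coordinate $i=1$ fixed and without raising $\max_i L(\beta^\star)_i$, strictly shrinking the range. The delicate point is that all $d+1$ entries of $L(\beta)$ are governed by only the $|R|$ slope parameters through the fixed matrix $B(d)$, so admissible perturbations are constrained; making the trade-off rigorous — for instance through the active-set/equioscillation conditions of the underlying linear program, or by exploiting the triangular structure of $B(d)$ that lets loadings on successive order statistics be adjusted essentially independently — is where the real work lies.

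Once $\min_i L(\beta^\star)_i = 0$ is in hand, the remaining claims follow at once. Since the range coincides with $\max_i L(\beta)_i$ on $\{L(\beta) \ge 0\}$, the range-minimizer is exactly $\argmin_\beta \max_i L(\beta)_i$ over that feasible regime, which is \autoref{eqn:amdw_assertion}; and the optimal intercept specializes to the midpoint $\tfrac12(\max_i L(\beta^\star)_i + 0)$, so $\beta_0^\star$ is determined by $\max_i L(\beta^\star)_i$ as asserted. Finally, the nonnegativity of the weights on the order statistics used in the \autoref{sec:practice} discussion is obtained by noting that the consecutive differences of the entries of $L(\beta^\star)$ recover the order-statistic loadings, so the established (monotone, non-negative) profile of $L(\beta^\star)$ translates directly into a sign statement about those weights.
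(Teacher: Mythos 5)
Your reduction is the same as the paper's: apply \autoref{thm:perturbation_analysis} with $\gamma = L(\beta)^\top$ to conclude that the optimal intercept is the midpoint and the induced error is half the range of the entries of $L(\beta)$, then observe that $L(\beta)_1 = 0$ identically because the first column of $V(d)$ vanishes. Up to that point you match the paper. But the step you yourself flag as ``where the real work lies'' --- showing that a range-minimizing $\beta^\star$ has $L(\beta^\star) \ge 0$ --- is exactly the content of the theorem, and your sketch does not close it. The perturbation you propose (``lift the offending entries back toward $0$ \dots by exploiting the triangular structure of $B(d)$ that lets loadings on successive order statistics be adjusted essentially independently'') runs into the obstruction you name in the same sentence: the $d+1$ entries of $L(\beta)$ are confined to an affine translate of the row space of $B(d)V(d)$, so you cannot move one entry while ``holding the rigid coordinate fixed and without raising $\max_i L(\beta)_i$'' unless you exhibit a concrete feasible direction. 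No such direction is produced, so the proposal is a restatement of the claim rather than a proof of it.

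The paper's resolution is a global rescaling rather than a coordinatewise perturbation, and it is worth internalizing because it sidesteps the coupling entirely. Set $m = \max_i (\beta^\top B(d)V(d))_i$ and suppose $m > 1$; since the first entry of $\beta^\top B(d)V(d)$ is always zero, one has $\max_i L(\beta)_i = \max\bigl(0,\, 1 - \min_i(\beta^\top B(d)V(d))_i\bigr)$ and, by \autoref{eqn:second}, the attained criterion at $\beta$ is $\bigl(\max_i L(\beta)_i - (1-m)\bigr)/2$. Replacing $\beta$ by $\beta/m$ scales every entry of $\beta^\top B(d)V(d)$ by the same factor, pins its maximum to $1$ (hence $\min_i L(\beta/m)_i = 0$), and a three-case comparison (one case via AM--GM) shows the criterion strictly improves. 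This forces $\|\beta^{\star\top}B(d)V(d)\|_\infty \le 1$ at the optimum, i.e.\ $L(\beta^\star)\ge 0$, after which your final paragraph goes through. One further caution: the midpoint formula gives $\beta_0^\star = \tfrac12\max_i L(\beta^\star)_i$, not $\max_i L(\beta^\star)_i$; the paper's statement drops this factor of two, and your hedge that $\beta_0^\star$ is ``determined by'' the maximum papers over a discrepancy you should instead call out and resolve.
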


\begin{proof}
Suppose that at a candidate $\beta$, $m \triangleq \max_i\ (\beta^\top B(d) V(d))_i > 1$. By \autoref{eqn:second}, the attained criterion will then be

\begin{align*}
(\max_i \ L(\beta)_i - (1 - m)) / 2.
\end{align*}

Note that the first column of $B(d) V(d)$ is entirely zero, so the first element of $\beta^\top B(d) V(d)$ is always zero. Thus, $\max_i \ L(\beta)_i = \max(0, 1 - \min_i \ (\beta^\top B(d) V(d))_i)$. And since for $\beta / m$, $\max_i \ L(\beta / m)_i \ge \max_i \ L(\beta)_i$, we have that 

\begin{align*}
& \max_i \ L(\beta / m)_i \le (\max_i \ L(\beta)_i - (1 - m)) \iff (1 - m) \le \max_i \ L(\beta)_i - \max_i \ L(\beta / m)_i.
\end{align*}

We need to consider three separate cases:

\begin{enumerate}
\item $0 = \max_i \ L(\beta / m)_i \implies 0 = \max_i \ L(\beta)_i$ in which case the inequality holds strictly. 
\item $L(\beta)_i = 0$ but $ \max_i \ L(\beta / m)_i > 0$ then $\max_i \ L(\beta / m)_i = 1 - \frac{1}{m} \min_i \ (\beta^\top B(d) V(d))_i$. This holds only if $ \min_i \ (\beta^\top B(d) V(d))_i > 1$, so 
\begin{align*}
1 - m \le -1 \times \left( 1 - \frac{1}{m} \min_i \ (\beta^\top B(d) V(d))_i \right) \iff 2 < m + \min_i \ (\beta^\top B(d) V(d))_i / m 
\end{align*}
which follows from the AM-GM inequality: $a > 1, b > 1 \implies (a + b / a) / 2 \ge \sqrt{b} > 1$.
\item If both terms are nonzero, then 
\begin{align*}
1 - m \le \frac{1 - m}{m} \times \min_i \ (\beta^\top B(d)V(d))_i \iff m \ge \min_i (\beta^\top B(d)V(d))_i.
\end{align*}
\end{enumerate}

Thus, at an optimal $\beta^\star$, we have that $||\beta^{\star \top} B(d)V(d)||_\infty \le 1$. And the assertion \autoref{eqn:amdw_assertion} follows from \autoref{eqn:first}. Since, $L(\beta^\star) \ge 0$, with the first entry being identically zero, then we have (from \autoref{eqn:second}) that the minimum is always zero and $\beta_0^\star = \max_i \ L(\beta)_i$. 
\end{proof}

\begin{lemma}
\label{lemma:increasing}
An optimal $f$ is increasing.
\end{lemma}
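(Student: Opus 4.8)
The plan is to put an optimal $f$ into order-statistic form and reduce the claim to a sign condition on the induced weights. By \autoref{lemma:permutation_invariance} we may take $f \in \calM_d^s(R)$, so $f(x) = \beta_0^\star + (\beta^\star)^\top S(x; d) = \beta_0^\star + c^\top (x_{(1)}, \ldots, x_{(d)})^\top$, where $c = (\beta^\star)^\top B(d)$ collects the weights the estimator places on the order statistics (using that $S(x;d) = B(d)(x_{(1)},\ldots,x_{(d)})^\top$). Each order statistic is itself nondecreasing in every coordinate, since $x_{(j)} = \min_{|T| = d-j+1} \max_{i \in T} x_i$ is a min of maxes of the coordinates; hence any affine combination of the $x_{(j)}$ with nonnegative coefficients is nondecreasing in every $x_i$. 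So it suffices to prove $c \ge 0$ entrywise. Writing $P_k = \sum_{i \le k} c_i$ for the partial sums (with $P_0 = 0$), this is exactly the statement that $c_m = P_m - P_{m-1} \ge 0$, i.e. that the partial sums are nondecreasing.

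Next I would translate the optimality certificate of \autoref{thm:all_my_work} into the language of these partial sums. With the explicit $V(d)$ of \autoref{eqn:Vd}, whose $(i,k)$ entry is $1$ precisely when $i < k$, the $k$th entry of $(\beta^\star)^\top B(d) V(d)$ equals $P_{k-1}$. Thus the conclusion $L(\beta^\star) \ge 0$ — equivalently $(\beta^\star)^\top B(d) V(d) \le (0, 1, \ldots, 1)$ entrywise — says precisely that $P_0 = 0$ and $P_k \le 1$ for every $k$, and it pins $\min_i L(\beta^\star)_i = 0$. So optimality already controls the partial sums from above and anchors $P_0$ at $0$; what remains is to control the \emph{increments}.

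The crux, and the step I expect to be hardest, is upgrading the one-sided bound $P_k \le 1$ to monotonicity $P_{m-1} \le P_m$, since $L(\beta^\star) \ge 0$ by itself only constrains the upper envelope of the $P_k$ and does not a priori forbid a dip. My plan is a local exchange argument on the coefficients: using that the rows of $B(d)$ are the binomial profiles of \autoref{eqn:Sxrd}, a sign reversal $c_m < 0$ at the optimum corresponds to an excess loading on one order of subpool max that can be shifted onto an adjacent order; one then checks, via \autoref{thm:perturbation_analysis}, that a reallocation staying in the row space of $B(d)$ can be chosen so that $\max_i L(\beta)_i$ does not increase, contradicting the minimality asserted by \autoref{thm:all_my_work}. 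I would first verify the clean base case $R = \{0, d-1\}$, where the optimum of \autoref{eqn:0d1} gives $c = ((d-1)/d,\, 1/d,\, 0, \ldots, 0)$ outright — manifestly nonnegative, hence increasing — and then proceed by induction on the orders present in $R$.

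The genuine difficulty is that the increments of the $P_k$ are coupled through the structure of $B(d)$, so the perturbation must simultaneously respect the row space of $B(d)$ and eliminate the negative increment without disturbing the other constraints; this coupling is exactly why the monotonicity is not a formal consequence of $L(\beta^\star) \ge 0$ alone and must be wrung out of the fine binomial structure of $B(d)$. I therefore expect the increasing property to be most transparent for the low-complexity estimators one would actually implement (those of \autoref{thm:d1_width}), with the general reallocation being the delicate technical core of the argument.
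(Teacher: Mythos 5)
Your setup is sound and your diagnosis of the logical situation is, if anything, sharper than the paper's, but the proposal is not a proof: the decisive step --- showing that the order-statistic weights $c = B(d)^\top\beta^\star$ are entrywise nonnegative --- is only announced as a plan (``a local exchange argument \ldots one then checks \ldots''), and you yourself flag it as the part you have not done. Everything before that point is bookkeeping (symmetrization via \autoref{lemma:permutation_invariance}, identifying $(\beta^{\star\top}B(d)V(d))_k$ with the partial sum $P_{k-1}$, and observing correctly that $L(\beta^\star)\ge 0$ only bounds the partial sums from above), so the entire content of the lemma is exactly what is missing.

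The paper's own proof takes a much shorter route and, notably, does not go through $c \ge 0$ at all. It identifies ``increasing'' with the condition $\beta^\top B(d)V(d)\ge 0$ --- nonnegativity of the partial sums $P_k$, not of the increments $c_k$ --- and proves it by comparison with the zero estimator: if some coordinate of $\beta^\top B(d)V(d)$ were negative, the corresponding entry of $L(\beta)$ would exceed $1$ while the first entry of $L(\beta)$ is identically $0$, so by \autoref{eqn:second} the attainable criterion exceeds $1/2$, which $\beta=0$ already beats. Two remarks follow. First, this one-line argument was available to you (you had all the ingredients in hand from \autoref{thm:perturbation_analysis}) and it closes the paper's version of the lemma immediately. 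Second, what it establishes is strictly weaker than your target: $P_k\ge 0$ for all $k$ does not imply $P$ nondecreasing --- for instance $c=(1,-1/2)$ in $d=2$, i.e.\ $f=x_{(1)}-x_{(2)}/2$, has nonnegative partial sums yet decreases in the smaller coordinate --- so your reduction to $c\ge 0$ is the honest reading of ``increasing,'' and the paper's argument does not reach it. The weaker property is, however, all that is used downstream: the proof of \autoref{lemma:nonnegligible_measure} needs only $f(p)\ge f(0)$ for $p\ge 0$, which follows from $\beta^{\star\top}B(d)V(d)\ge 0$ together with $\lambda\ge 0$. So either prove the weaker statement the short way, or actually supply the exchange argument; as written, the gap is real.
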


\begin{proof}
Two points $x_1, x_2$ will have $x_1 \ge x_2$ if and only if their V representations are similarly ordered, thus $f$ will be increasing if and only if $\beta^\top B(d) V(d) \ge 0$.

Suppose otherwise, that for some $i$ $(\beta^\top B(d) V(d))_i < 0$. Then the criterion will be $\ge (1 - \beta^\top B(d) V(d))_i) / 2 > 1 / 2$ by \autoref{eqn:first}. However, by setting $\beta = 0$, a criterion of $1 / 2$ can always be achieved, thus $\beta$ cannot be optimal.
\end{proof}

\subsection{Proof of \autoref{lemma:nonnegligible_measure}}
\label{sec:proof_nonnegligible_measure}

\begin{proof}

\autoref{lemma:increasing} shows that an optimal estimator is weakly increasing. Thus, for $\delta < \beta_0^\star$, $p \triangleq (\delta, 0, \hdots, 0) \implies f(p) \ge \beta_0 = f(0) = \beta_0^\star > \delta = \max(p)$ and means an error of at least $\beta_0^\star - \delta$. So at $p$, the error will be at least $\epsilon$ iff $\beta_0^\star - \epsilon \ge \delta$. This is true for all $p \in [0, \delta]^d$, thus $[0, \beta^\star_0 - \epsilon]^d\subseteq W(\epsilon; R) \implies \vol(W(\epsilon; R)) \ge (\beta^\star_0 - \epsilon)^d$. \autoref{thm:all_my_work} shows that $\beta_0^\star = \err(R) / 2$, and the assertion is proven. 

\end{proof}

This simplistic volume bound could straightforwardly be improved by including not just the intercept in the computation of the bound. And, as one might intuit, a similar analysis holds at all vertices. 
\subsection{Proof of \autoref{thm:main_lower_bound}}
\begin{proof}

Assume first that the functional decomposition is in normalized form. To the $k$th $r$-tuple in $\{1, 2, \hdots, d\}$, $C(k, r, d)$, associate a ``sliver'' of the unit cube $[0, 1]^d$. Let $F(c) = \{x \in [0, 1]^d : x_{c_1} \ge x_{c_2} \ge \hdots \ge x_{c_r} \ge x_{j}\textnormal{ for all } j \not\in c\}$ be a mapping $\{1, 2, \hdots, d\}^r \rightarrow [0, 1]^d$ that extracts the subset of $[0, 1]^d$ where the coordinates in $c$ are in decreasing order, and all the remaining coordinates are less than $x_{c_r}$. Let

\begin{align*}
\Xi_1 &= F((1, \hdots, r)) 
\textnormal{ and for } k > 1\\
\Xi_k &= F(C(k, r, d)) \backslash \cup_{j = 1}^{k - 1} \Xi_j.
\end{align*}


Let $H(x; c) = \sum_{j = 1}^{d - r + 1} {d - j \choose r - 1} x_{c_{j}}$. For all $x \in \Xi_k$, by construction $S(x; r, d) = H(x; C(k, r, d))$, thus using the indicator function to partition the unit cube along each sliver, we see that:

\begin{equation}
\begin{aligned}
\label{eqn:equiv_slivers}
S(x; r, d) &= \underbrace{\sum_{k=1}^{{d \choose r}} \chi_{\Xi_k}(x)}_{ = 1} \times \frac{1}{{d \choose r}} \sum_{j = 1}^{d - r + 1} {d - j \choose r - 1} x_{(j)}\\
           &= \frac{1}{{d \choose r}} \sum_{k=1}^{{d \choose r}} \chi_{\Xi_k}(x) \times H(x; C(k, r, d)).
\end{aligned}
\end{equation}

Let $U^n(x; r, d) = \sum_{j=1}^{n} \chi_{\Xi_j}(x) \times H(x; C(j, r, d))$, and let 

\begin{align*}
G^n(x; C(k, r, d)) = \begin{cases} \chi_{\Xi_k}(x) \times H(x; C(k, r, d)) & \textnormal{ if } k \ge n \\
                                                                         0 & \textnormal{ otherwise }. 
\end{cases}
\end{align*}

We show by induction that $x \mapsto G^n(x; C(k, r, d)), k = 1, \hdots, {d \choose r}$ is the unique functional decomposition of $U^n$. For $n = 1$

\begin{align*}
U^1(x; r, d) &= \chi_{\Xi_1}(x) \times H(x; C(1, r, d)) + 0 + \hdots + 0 \\
             &= \sum_{k=1}^{{d \choose r}} G^1(x; C(k, r, d))
\end{align*}

thus the base case of the inductive hypothesis holds. Suppose that 
\begin{align*}
U^{n-1}(x; r, d) = \sum_{k=1}^{{d \choose r}} G^{n-1}(x; C(k, r, d)) 
\end{align*}

is the unique functional decomposition of $U^{n-1}$. Then 
\begin{align}
\label{eqn:inductive_line1}
U^n(x; r, d) &= U^{n-1}(x; r, d) + \chi_{\Xi_n}(x) \times H(x; C(n, r, d)) \\ 
\label{eqn:inductive_line2}
             &= \sum_{k=1}^{{d \choose r}} G^{n-1}(x; C(k, r, d)) + \chi_{\Xi_n}(x) \times H(x; C(n, r, d)) \\ 
\label{eqn:inductive_line3}
             &= \sum_{k=1}^{n - 1} G^{n-1}(x; C(k, r, d)) + \chi_{\Xi_n}(x) \times H(x; C(n, r, d)) \\ 
\label{eqn:inductive_line4}
             &= \sum_{k=1}^{n - 1} G^{n}(x; C(k, r, d)) + \chi_{\Xi_n}(x) \times H(x; C(n, r, d)) \\ 
\label{eqn:inductive_line5}
             &= \sum_{k=1}^{n - 1} G^{n}(x; C(k, r, d)) + G^{n}(x; C(n, r, d)) \\ 
\label{eqn:inductive_line6}
             &= \sum_{k=1}^{{d \choose r}} G^{n}(x; C(k, r, d)) 
\end{align}

where \autoref{eqn:inductive_line1} is by the definition of $U^n$, \autoref{eqn:inductive_line2} is by the inductive hypothesis, \autoref{eqn:inductive_line4} follows because $G^n(x; C(k, r, d)) = G^{n-1}(x; C(k, r, d))$ if $k \le n - 1$, and \autoref{eqn:inductive_line5} is by the definition of $G^n$. \autoref{eqn:inductive_line3} and \autoref{eqn:inductive_line6} follow simply by recognizing which terms of the sum are zero by definition. Thus, $G^n$ is a functional decomposition of $U^n$. 

To see that it is unique, suppose that $f_1, \hdots, f_{{d \choose r}}$ is another functional decomposition with $f_j(x) \neq G^n(x; C(j, r, d))$. If $j = n$, then $f_n(x) \neq H(x; C(n, r, d))$ which contradicts the definition of $H$. If $j \neq n$, then this contradicts the inductive hypothesis, because then $G^{n-1}$ could not be a functional decomposition of $U^{n-1}$. 

Since $x \mapsto G^n(x; C(k, r, d)), k = 1, \hdots, {d \choose r}$ is the unique functional decomposition of $U^n$, for all $n$, in particular, $x \mapsto G^{d \choose r}(x; C(k, r, d)), k = 1, \hdots, {d \choose r}$ is the unique functional decomposition of $U^{{d \choose r}} = {d \choose r} \times S$. And since $G^{{d \choose r}} \neq 0$ for all $k$, no function in a normalized functional decomposition of $S$ is zero.

The process or putting a general functional decomposition into normalized form weakly reduces the number of nonzero functions, thus \autoref{thm:main_lower_bound} straightforwardly holds for general functional decompositions. 

Since evaluating an $\{0, 1, 2, \hdots, d-1\}$-estimator thus entails compuing ${d \choose 1} + {d \choose 2} + \hdots + {d \choose d - 1} $ terms, and is therefore of $O(2^d)$ complexity. 

\end{proof}

\subsection{Proof of \autoref{thm:d1_width}}

\begin{proof}

In order to formally analyze this problem, we identify neurons with integer tuples $\in \{1,2, \hdots, d\}^{\zeta(d, j)}$ representing the indices over which subpool maxes are taken. Within a layer, the maximum of two neurons computes the value taken by a neuron in the subsequent layer, with that neuron indicated by the union of the two upstream neuron's integer tuples. 

Let $T(d, j)$ denote the neurons at the $j$th layer of a network $j = 1, 2, \hdots, \textsc{depth}(d)$. Then $w(d, j)$ is the number of elements in $T(d, j)$ and implementing a $\{d-1\}$-estimator requires that the final set of tuples be all $d$ tuples of length $d - 1$ with each element dropped in turn:

\begin{align*}
T(d, \textsc{depth}(d)) = \{&(1, 2, \hdots, d - 1), \\
                            &(1, 2, \hdots, d - 2, d), \\
                            &\hdots, \\
                            &(1, 3, \hdots, d- 1, d), \\
                            &(2, 3, \hdots, d - 1, d)\}.
\end{align*}

Because each element of $T(d, \textsc{depth}(d))$ differs only by a single term, for all $1 \le i_1 < i_2 \le d$, 

\begin{equation}
\begin{aligned}
\label{eqn:reuse_condition}
\{(t_{i_1}, \hdots, t_{i_2}) : t \in T(d, \textsc{depth}(d)) \} &= \{(i_1, i_1 + 1, \hdots, i_2 - 1), \\
                                                   & \ \ \ \ \ \ \ (i_1, i_1 + 1, \hdots, i_2 - 2, i_2), \\
                                                   & \ \ \ \ \ \ \ \ \ \vdots \\   
                                                   & \ \ \ \ \ \ \ (i_1 + 1, i_1 + 2, \hdots, i_2 - 1, i_2)\}
\end{aligned}
\end{equation}

which is of size $i_2 - i_1 + 1$. Let $\psi$ be the function that splits increasing pairs of integers at a midpoint. For $A \subseteq \{a: a_2 > a_1\} \subseteq \{1, 2, \hdots, d\}^{2}$, and $o(m) = 1$ if $m$ is odd, and zero otherwise:

\begin{align*}
\psi(A, s) = \left\{(a_{1}, a_{1} + s) : a \in A  \right\} \cup \left\{(a_{1} - o(a_2 - a_1) + s, a_2) : a \in A \right\}.
\end{align*}

Let $\Psi(d, \textsc{depth}(d)) = \{(1, d - 1)\}$ and for $j = 1, 2, \hdots, \textsc{depth}(d) - 1$, let
\begin{align*}
\Psi(d, \textsc{depth}(d) - j) = \psi(\Psi(d, \textsc{depth}(d) - j + 1), \zeta(d, j))
\end{align*}

be defined recursively as the set of ``split points'' that arise from progressively splitting $\Psi(d, \textsc{depth}(d))$ at $\zeta(d, 1), \zeta(d, 2),$ etc. 

Finally, for $j = 1, 2, \hdots, \textsc{depth}(d)$ let $T(d, j) = \{(t_{i_1}, \hdots, t_{i_2}) : t \in T(d, \textsc{depth}(d)), (i_1, i_2) \in \Psi(d, j) \}$ be the set of all final tuples split at the points given by $\Psi(d, j)$. 
The notation is heavy, but idea is simple: a set of subpools is generated at the $j$th layer of the network by splitting each element of the terminal decomposition according to the split points computed by $\Psi(d, \textsc{depth}(d))$. The neurons at that layer have thus computed the subpool maxes of size $\zeta(d, j)$ by the $j$th layer. The size of the subpools, $\zeta(d, j)$ double with each $j$, and the split occurs in the middle of the existing tuple indices. Refining the indices by iterating backwards insures that each element of $T(d, j)$ is formed from the pairwise union of elements in $T(d, j-1)$, and thus can be computed by a forward pass through the network. 

The size of the subpool and the number of terms in the decomposition are connected by the equation that the size of $T(d, j)$ is $2^{\textsc{depth}(d) - j} \times (1 + \zeta(d, \textsc{depth}(d) - j))$. This is the product of (1) how many tuples each original tuple is split into, $2^{\textsc{depth}(d) - j}$, and (2) the number of unique tuples of that size, $1 + \zeta(d, j)$. The first term is because each backwards step taken from $\textsc{depth}(d)$ doubles the number of tuples. The second term follows by \autoref{eqn:reuse_condition}.

The expression for $w(d, 1)$ follows from a dichotomization of $T(d, 1)$ into two types: there must evidently be $d - 1$ terms of the form $(k, k + 1)$ for $k = 1, 2, \hdots,  d- 1$ for all $d$. There will be a term of the form $(k, k + 2)$ for each element of $\Psi(d, 1)$, which is straightforwardly seen to be of size $2^{\lfloor \log_2(d - 2)\rfloor}$.

\end{proof}


\section{$L_2$ problem}
\label{sec:main_theorem_width_L2}

For brevity in what follows, let $K(d) = B(d) V(d) \in \bbR^{d -1 \times d + 1}$, then the difference between the fitted and actual values, as a function of $\lambda$ is:

\begin{align}
\label{eqn:residual_simplex}
\lambda \mapsto (1_{d+1} - e_{d+1,1})^\top \lambda - \beta_0 - \beta^\top K(d) \lambda = - \beta_0 + ((1_{d+1} - e_{d+1,1}) - K(d)^\top\beta)^\top \lambda .
\end{align}

From \autoref{eqn:residual_simplex} the squared $L_2$ error is 

\begin{align}
\label{eqn:error_concrete}
\int_{\Delta_d}(\beta_0 - ((1_d - e_{d1}) - K(d)^\top \beta)^\top \lambda )^2 \mathrm{d} \lambda.
\end{align}

To lighten the notation, we wrap this novel optimization problem into \autoref{lemma:quadratic_problem}.

\begin{lemma}
\label{lemma:quadratic_problem}
Let $\alpha_0 \in \bbR$, $\alpha \in \bbR^d$, $A \in \bbR^{d + 1}$, and $\Xi \in \bbR^{d + 1 \times d}$. Let $v(d) =  \int_{\Delta_d} \dif \lambda$, then

\begin{align}
\label{eqn:quadratic_criterion_abstract}
\min_{\alpha_0, \alpha}\ \int_{\Delta_d} (\alpha_0 - (A - \Xi \alpha)^\top \lambda)^2 \dif \lambda = v(d) A^\top \left(I - \Sigma(d) \Xi \left(\Xi^\top \Sigma(d)\Xi \right)^\dagger \Xi^\top \Sigma(d) \right)A.
\end{align}
\end{lemma}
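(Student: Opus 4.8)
The plan is to read \autoref{eqn:quadratic_criterion_abstract} as a population least-squares problem and solve it in two stages, eliminating the intercept $\alpha_0$ first and then the vector $\alpha$. Writing $c = A - \Xi\alpha \in \bbR^{d+1}$ and expanding the square, the objective becomes the scalar quadratic $\int_{\Delta_d}(\alpha_0 - c^\top\lambda)^2\,\dif\lambda = v(d)\,\alpha_0^2 - 2\alpha_0\, m^\top c + c^\top M c$, where $m = \int_{\Delta_d}\lambda\,\dif\lambda$ and $M = \int_{\Delta_d}\lambda\lambda^\top\,\dif\lambda$ are the first and raw second moments of the uniform measure on the simplex. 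I would record these moments once (they are the $\mathrm{Dirichlet}(1,\dots,1)$ moments, with centroid $\mu = m/v(d) = \tfrac{1}{d+1}1_{d+1}$) and define $\Sigma(d) = \tfrac{1}{v(d)}\bigl(M - \tfrac{1}{v(d)}mm^\top\bigr) = \tfrac{1}{v(d)}\int_{\Delta_d}(\lambda-\mu)(\lambda-\mu)^\top\,\dif\lambda$ to be the covariance matrix of $\lambda$; this is the $\Sigma(d)$ appearing in the statement.

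First I would minimize over the free scalar $\alpha_0$. The stationarity condition gives $\alpha_0^\star = \mu^\top c$, and substituting back cancels the mean contribution, leaving $v(d)\,c^\top\Sigma(d)\,c$. Equivalently, including the constant feature in the regression centers both the target $A^\top\lambda$ and the features $\Xi^\top\lambda$, so the residual depends on the data only through the covariance $\Sigma(d)$. The problem thus collapses to $v(d)\min_{\alpha}(A-\Xi\alpha)^\top\Sigma(d)(A-\Xi\alpha)$.

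Next I would solve this possibly rank-deficient generalized least-squares problem by whitening. Factoring $\Sigma(d) = \Sigma(d)^{1/2}\Sigma(d)^{1/2}$ with $\Sigma(d)^{1/2}$ the symmetric positive-semidefinite square root, the objective becomes $\lVert \Sigma(d)^{1/2}A - \Sigma(d)^{1/2}\Xi\alpha \rVert_2^2$, an ordinary least-squares problem with design matrix $B = \Sigma(d)^{1/2}\Xi$ and target $\Sigma(d)^{1/2}A$. Its optimal value is $\lVert (I - \Pi)\Sigma(d)^{1/2}A \rVert_2^2$, where $\Pi = B(B^\top B)^\dagger B^\top$ is the orthogonal projector onto $\operatorname{range}(B)$. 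Using $B^\top B = \Xi^\top\Sigma(d)\Xi$ together with $\Sigma(d)^{1/2}B = \Sigma(d)\Xi$ and $B^\top\Sigma(d)^{1/2} = \Xi^\top\Sigma(d)$, this equals $A^\top\bigl(\Sigma(d) - \Sigma(d)\Xi(\Xi^\top\Sigma(d)\Xi)^\dagger\Xi^\top\Sigma(d)\bigr)A$; multiplying by the factor $v(d)$ carried along from the intercept step yields the asserted value, the leading matrix inside the bracket being $\Sigma(d)$.

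The main obstacle is that $\Sigma(d)$ is singular: since $1_{d+1}^\top\lambda\equiv 1$ on $\Delta_d$, the vector $1_{d+1}$ lies in its kernel, so $\Xi^\top\Sigma(d)\Xi$ need not be invertible and the naive normal-equation inverse is unavailable. The care therefore lies in justifying the Moore--Penrose pseudoinverse without any rank hypothesis: I must verify that the normal equations $\Xi^\top\Sigma(d)\Xi\,\alpha = \Xi^\top\Sigma(d)A$ are consistent, which follows from $\operatorname{range}(B^\top) = \operatorname{range}(B^\top B)$, and that the projector identity $B(B^\top B)^\dagger B^\top = \Pi_{\operatorname{range}(B)}$ holds in general. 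Both are standard facts about pseudoinverses, and once invoked they close the argument cleanly. A quick sanity check at $d = 1$ (where $\Sigma(1) = \tfrac{1}{12}\bigl(\begin{smallmatrix}1 & -1\\ -1 & 1\end{smallmatrix}\bigr)$ and the residual vanishes for $\Xi = (1,0)^\top$) confirms the normalization and the placement of $\Sigma(d)$ in the leading term.
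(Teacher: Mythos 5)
Your proof follows essentially the same two-stage route as the paper's: eliminate the intercept via its first-order condition to reduce to the generalized least-squares problem $v(d)\min_\alpha (A-\Xi\alpha)^\top\Sigma(d)(A-\Xi\alpha)$, then minimize over $\alpha$ using the pseudoinverse. Two remarks: your whitening step supplies the justification for using the Moore--Penrose pseudoinverse with the singular $\Sigma(d)$ (consistency of the normal equations and the projector identity), which the paper's proof takes for granted; and your final expression correctly has $\Sigma(d)$ as the leading term inside the bracket --- the $I$ appearing in \autoref{eqn:quadratic_criterion_abstract} is an error in the statement, as the case $\Xi = 0$ shows (the minimum is then $v(d)\,A^\top\Sigma(d)A$, the scaled variance of $A^\top\lambda$, not $v(d)\,A^\top A$).
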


\begin{proof}

Expanding the criterion above:
\begin{align*}
\alpha_0^2 v(d) - 2\alpha_0 (A - \Xi\alpha)^\top \left( \int_{\Delta_d} \lambda \dif \lambda \right)  + (A - \Xi\alpha)^\top \left(\int_{\Delta_d} \lambda\lambda^\top \dif \lambda \right) (A - \Xi \alpha).
\end{align*}

The first order criterion for optimality of $\alpha_0$ evidently requires that 

\begin{align*}
\alpha_0 = (A - \Xi\alpha)^\top \left( \int_{\Delta_d} \lambda \dif \lambda \right) / v(d) ,
\end{align*}

thus, the criterion equals 

\begin{align*}
v(d) \times (A - \Xi \alpha)^\top \left( \int_{\Delta_d} \lambda \lambda^\top \dif \lambda / v(d) - \int_{\Delta_d} \lambda \dif \lambda / v(d) \int_{\Delta_d} \lambda^\top \dif \lambda / v(d) \right) (A - \Xi \alpha). 
\end{align*}
Write the inner term -- the covariance matrix of a  Dirichlet$(1, 1, \hdots, 1)$ distribution -- as $\Sigma(d)$, then this weighted least squares problem is solved by 
\begin{align*}
\alpha^\star = \left(\Xi^\top \Sigma(d)\Xi \right)^\dagger \Xi^\top \Sigma(d)A.
\end{align*}
Plugging this equation into the criterion gives \autoref{eqn:quadratic_criterion_abstract}.
\end{proof}

Phrasing \autoref{eqn:error_concrete} in terms of \autoref{eqn:quadratic_criterion_abstract}, we have that the squared $L_2$ error of the optimal coefficients is:
\begin{align}
v(d) (1_d - e_{d1})^\top \left(I - \Sigma(d) K(d)^\top \left(K(d) \Sigma(d)K(d)^\top \right)^\dagger K(d) \Sigma(d) \right)(1_d - e_{d1}).
\end{align}

The hat matrix $\Sigma(d) K(d)^\top \left(K(d) \Sigma(d)K(d)^\top \right)^\dagger K(d) \Sigma(d)$ has rank $d - 1$, thus $(1_d - e_{d1})$ cannot possibly lie in the nullspace of the projection operator, and we have a strictly positive error. We skip deriving the exact expressions as a function of $d$ and note that the same analysis could be straightforwardly conducted constraining different coefficients to equal zero. 

%


\section{Optimal approximation error}
\label{sec:optimal_approximation_error}
\autoref{fig:all_subset_approx} gives an upper bound on the error of optimal estimators as a function of $R$ for $d = 2, 3, 4$. These values are computed numerically using the construction \autoref{eqn:criterion}. 

\begin{figure}
\begin{center}
\begin{tabular}{|l|l|l|}
\hline 
$d$ & $R$ & error \\
\hline 
2 & $\{1 \}$        & 1/3 \\
2 & $\{0, 1 \}$       & 1/4 \\
\hline 
3 & $\{1 \}$         & 1/2 \\
3 & $\{2 \}$        & 1/5 \\
3 & $\{0, 1\}$       & 1/3 \\
3 & $\{0, 2\}$       & 1/6 \\
3 & $\{1, 2\}$       & 1/7 \\
3 & $\{0, 1, 2\}$    & 1/8 \\
\hline 
4 & $\{1\}$        & 3/5 \\
4 & $\{2\}$         & 1/3 \\
4 & $\{3\}$         & 1/7 \\
4 & $\{0, 1\}$       & 3/8 \\
4 & $\{0, 2\}$       & 1/4 \\
4 & $\{0, 3\}$       & 1/8 \\
4 & $\{1, 2\}$       & 1/5 \\
4 & $\{1, 3\}$       & 1/9 \\
4 & $\{2, 3\}$       & 1/13 \\ 
4 & $\{0, 1, 2\}$    & 1/6  \\ 
4 & $\{0, 1, 3\}$    & 1/10 \\
4 & $\{0, 2, 3\}$    & 1/14 \\
4 & $\{1, 2, 3\}$    & 1/15 \\
4 & $\{0, 1, 2, 3\}$ & 1/16 \\
\hline 
\end{tabular}
\end{center}
\caption{All subset approximation errors for $d = 2, 3, 4$ (the error for $R = \{0\}$, which is 1 / 2 for all $d$ is omitted)}
\label{fig:all_subset_approx}
\end{figure}

\section{Experimental Details}

\label{sec:appendix:experiments}

\begin{figure}[t!]
    \centering
   \includegraphics[width=0.95\textwidth]{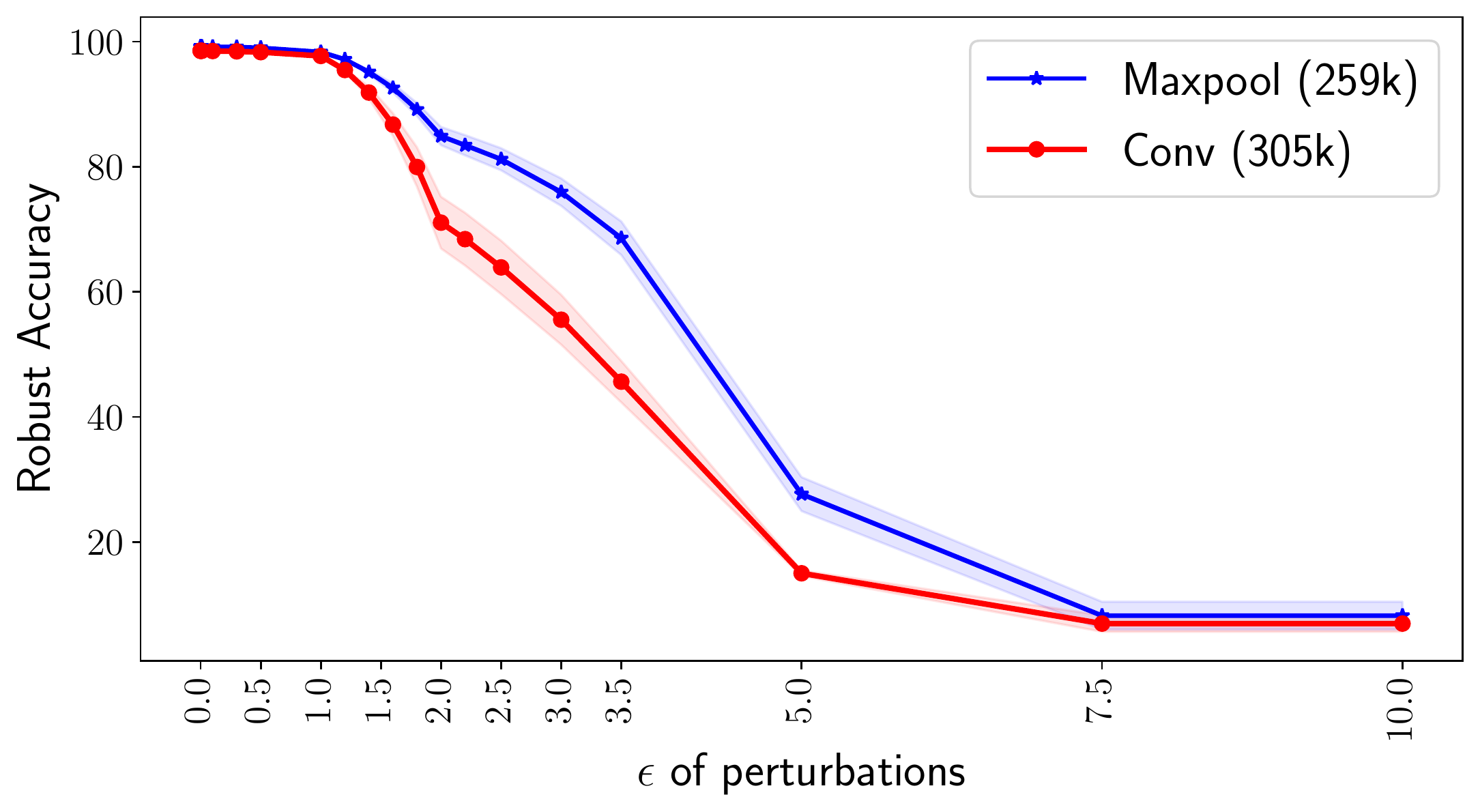}
    \caption{Effect of perturbations with \( \ell_\infty \leq \epsilon \) on the accuracy of the \lenet \ model variants on the \mnist\ dataset. }
    \label{fig:mnist-experiment}
\end{figure}

In this section, we present in greater detail our experimental framework. The goal is to show the greater adversarial robustness of models \textit{with} max pool layers than those \textit{without}. Specifically, Convolutional Neural Nets (CNNs) and Residual Networks (ResNets) are trained on the \mnist \ and \cifar data sets. Then, an adversarial attack is performed for various levels of perturbations, denoted by \( \epsilon \), and the robust accuracy (mean and standard deviation) of the various models is reported over three different random seeds. We present results for the Fast Gradient Sign Method \cite{fgsm} attack.

In each experiment, we create a model incorporating max pool layer(s). Then, the network is modified by replacing each max pool layer with a trainable variant, ensuring that the output of the original layer and the modified one have the same shape.

The legend of each figure presents the name of the model variant as well as the number of trainable parameters in parentheses. The width of the lines is proportional to the number of parameters in the model. The max pool model variant is always depicted in blue. The experiments are developed in PyTorch \cite{Paszke2017} and PyTorch Lightning with the help of the foolbox \cite{Rauber2017} library for the adversarial attack.  

\subsection{Experimental configurations}

\paragraph{\lenet \ experiment on \mnist}
We train two convolutional neural networks (CNNs) on the digit classification dataset \mnist. The results are shown in \autoref{fig:mnist-experiment}.

The first model, in blue, has 259\,106 trainable parameters and consists of two convolutional layers, with 32 and 64 channels. Their kernel size is equal to five. Both layers are succeeded by a two-dimensional max pool with kernel size, stride and padding equal to three, two and one, respectively. The network is completed with two fully-connected layers of 1024 and 200 neurons, leading to the output of ten logits. The second model, in red, has 305\,282 trainable parameters and has the same structure as the previous model. However, the max pool layers are replaced by a convolutional layer with the same number of input channels as the output of the preceding convolutional layer, hence the increase in parameters. Both models are trained for ten epochs of Stochastic Gradient Descent with learning rate and momentum equal to 0.01 and 0.9, respectively.

\paragraph{\lenet \ experiment on \cifar}
A similar experiment is performed on the more challenging \cifar\ dataset. 
The results are shown in \autoref{fig:cifar-experiment}.

The max pool model, in blue, has 232\,162 parameters and consists of three convolutional layers of 32, 64, 128 channels, respectively. Again, each of these layers is succeeded by max pool module identical to the \mnist\ experiment. The convolutional variant, depicted in green, has 253\,890 parameters has a similar modification as before, i.e. the max pool layer is replaced by a convolutional one with kernel size, stride and padding identical to the corresponding max pool layer. Finally, the strided variant, in gray, has the same number of parameters as the original max pool model. In this case, we replace the block of convolution and max pool with a strided convolution of stride equal to two. Hence, this modification does not incur an increase in number of parameters, while maintaining the same output shape at all intermediate steps. The models are trained for 100 epochs of SGD with learning rate and momentum equal to 0.01 and 0.9 respectively. The learning rate is decayed by a parameter \( \gamma=10 \) in epochs 50, 70 and 90. The batch size is 128.

\begin{figure}[t]
    \centering
   \includegraphics[width=0.95\textwidth]{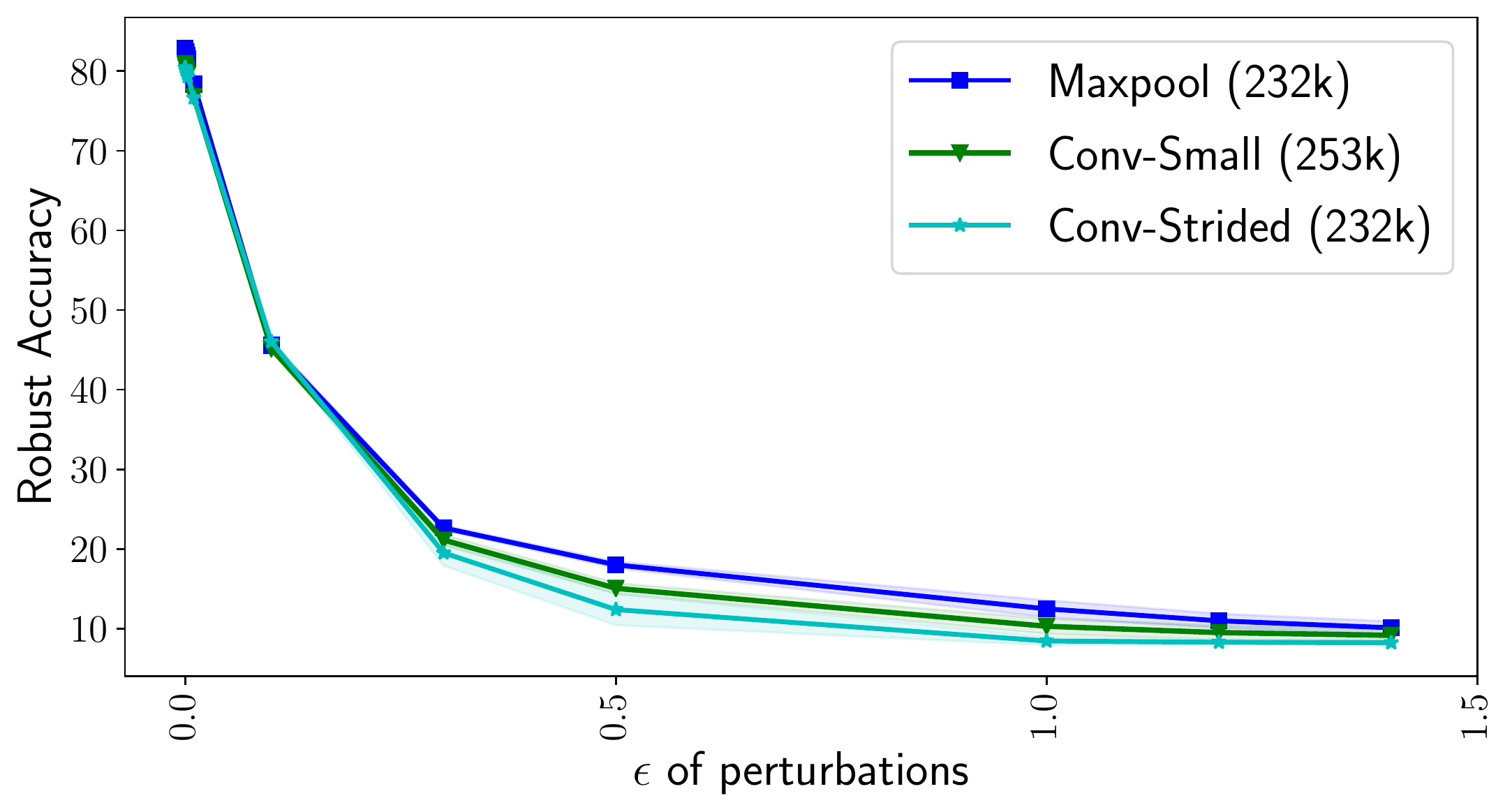}
    \caption{Effect of perturbations with \( \ell_\infty \leq \epsilon \) on the accuracy of the \lenet \ model variants on the \cifar\ dataset. }
    \label{fig:cifar-experiment}
\end{figure}

\paragraph{\resnet \ experiment on \cifar}

We use the \resnet\ variant proposed in \cite{Page2018_8}. 
This model consists of a preparatory whitening layer
, three residual blocks and a classifier layer. First, the preparatory layer has two convolutions and Ghost Batch Normalization \cite{Hoffer_Hubara_Soudry_2017}. The three residual blocks have identical structure, with the exception of the number of channels in the convolutional layer; the channels are doubled with each layer, from 64 to 128 to 256. Each of these layers consists of two blocks: the first one has a convolutional layer, a max pool layer with kernel size and stride equal to two and Ghost Batch Norm, while the second block employs a residual connection with similar structure as the previous block modulo the max pool. Finally, the classifier layer is comprised of a max pooling layer of kernel size and stride equal to four and a fully connected layer resulting in ten outputs.

Overall, the Maxpool model, in blue, has four maxpooling layers and 4\,666\,265 parameters. The large convolutional model, depicted in red, has 10\,238\,233 parameters and replaces the Maxpool layers by convolutional ones with the same kernel size, stride and padding. The small convolution variant, depicted in green, has 5\,273\,881 parameters and the kernel size is set to one for all maxpooling replacements. Finally, the strided model, depicted in gray, has 4\,928\,921 parameters and. for each residual layer, the pair of convolution and maxpooling is replaced by a strided convolution, while the maxpooling layer preceding the fully connected one is replaced with a convolutional layer of kernel size equal to one. The stride remains equal to four, as in the corresponding maxpooling layer.

The models are trained for 50 epochs using float 16 precision. The learning rate follows a piecewise linear schedule; starting at zero the learning rate linearly increases to 0.4 until the fifth epoch and then linearly decays to zero until the final epoch.


\subsection{Analysis}

Our objective lies in showing the superior adversarial robustness of models incorporating max pooling. In each experiment, we use a max pooling model, drawn from widely used neural networks such as \lenet \ and \resnet. We modify the original model to produce comparisons. Specifically, the modifications simply replace the maxpooling layer with a convolutional layer, or the pair of convolutional layer and maxpooling (which traditionally come succession) with a strided convolution. In both cases, the new layer produces outputs of the same shape as the original layer, lending itself to an one-to-one comparison in terms of performance on (robust) accuracy. It is important to note that the first modification results in an increase in the number of trainable parameters. Subsequently, we perform an adversarial attack, FGSM in our case, to illuminate the adversarial robustness properties of each model.
A common theme of all experiments is that the exclusion of the maxpooling layer results in a tradeoff between (robust) accuracy and model complexity.

First, in the \mnist\ experiment both variants reach similar levels of performance on the clean accuracy (\( \epsilon=0 \)); the max pool variant achieves $99.19 \pm 0.06$ while the convolutional model $98.54 \pm 0.14$. This is not surprising given the low difficulty of the dataset. Nevertheless, the model \textit{with} maxpooling is characterized by strictly higher adversarial robustness, since the difference in performance heightens for larger \( \epsilon \). In the \cifar\ experiment, the observations are similar in nature; replacing maxpooling with a trainable layer renders the model more susceptible to adversarial perturbations. However, the modified models do not exhibit the same level of clean accuracy, despite the increase in model complexity. Specifically, the mean clean accuracies (over 3 runs with different random seeds) of the max pool, small convolutional and strided convolution models are \( 82.89 \pm 0.08\% \), \(  80.94 \pm 0.26\% \) and \( 80.45\pm 0.66\%  \), respectively. It is important to note that the \lenet\ architecture does not achieve state-of-the-art results on any of the model variants presented. However, it serves as a direct comparison with the previous experiment. Finally, the \resnet\ experiment perhaps illuminates the tradeoff more clearly. \autoref{fig:resnet-experiment-main-text} (see main text) presents a dichotomy due to the exclusion of the max pool layer; the practitioner should choose between model complexity (measured in number of trainable parameters and, by extension, training and inference times) and (robust) accuracy. The large convolution model achieves a clean accuracy of \( 93.37 \pm 0.14\% \) compared to \( 94.49 \pm 0.20\%  \) of the original model and is able to match its robust accuracy for different \( \epsilon \), while using more than double the parameters. The other two variants, however, have lowest clean accuracies (\( 89.22 \pm 0.09\%  \) for the strided model and \( 87.46 \pm 0.99\%  \) for the small convolutional) and present a faster deterioration in adversarial robustness.

\subsection{Detailed Results}

For completenes, we present the experimental results in tabular form. The experiments were repeated three times (with different random seeds) and the mean \( \pm \) standard deviation is reported.
    
\begin{table}[htpb]
    \centering
    \caption{Detailed results on \mnist.}
    \label{tab:mnist}
    
    \begin{tabular}{c|cc}
        \toprule
        \( \epsilon \) & Maxpool & Conv \\
        \midrule
        0.000  &  $99.19 \pm 0.06$ &  $98.54 \pm 0.14$ \\
        0.001  &  $99.19 \pm 0.06$ &  $98.54 \pm 0.14$ \\
        0.002  &  $99.19 \pm 0.06$ &  $98.54 \pm 0.14$ \\
        0.003  &  $99.19 \pm 0.06$ &  $98.54 \pm 0.14$ \\
        0.010  &  $99.19 \pm 0.06$ &  $98.54 \pm 0.14$ \\
        0.100  &  $99.18 \pm 0.05$ &  $98.50 \pm 0.16$ \\
        0.300  &  $99.14 \pm 0.04$ &  $98.45 \pm 0.16$ \\
        0.500  &  $99.00 \pm 0.07$ &  $98.33 \pm 0.11$ \\
        1.000  &  $98.34 \pm 0.09$ &  $97.72 \pm 0.07$ \\
        1.200  &  $97.16 \pm 0.08$ &  $95.47 \pm 0.15$ \\
        1.400  &  $95.14 \pm 0.38$ &  $91.86 \pm 0.97$ \\
        1.600  &  $92.59 \pm 0.72$ &  $86.74 \pm 1.81$ \\
        1.800  &  $89.16 \pm 1.05$ &  $79.97 \pm 3.15$ \\
        2.000  &  $84.90 \pm 1.46$ &  $71.07 \pm 4.14$ \\
        2.200  &  $83.44 \pm 1.62$ &  $68.43 \pm 4.22$ \\
        2.500  &  $81.20 \pm 1.77$ &  $63.91 \pm 4.27$ \\
        3.000  &  $75.95 \pm 2.20$ &  $55.55 \pm 3.99$ \\
        3.500  &  $68.58 \pm 2.69$ &  $45.65 \pm 3.32$ \\
        5.000  &  $27.67 \pm 2.71$ &  $15.02 \pm 0.46$ \\
        7.500  &   $8.24 \pm 2.22$ &   $6.98 \pm 1.31$ \\
        10.000 &   $8.24 \pm 2.22$ &   $6.98 \pm 1.31$ \\
        \bottomrule
    \end{tabular}
\end{table}

\begin{table}[htpb]
    \centering
    \caption{Detailed results on \cifar\ with \lenet.}
    \label{tab:cifar-lenet}
    \begin{tabular}{c|ccc}
        \toprule
        \( \epsilon \) &           Maxpool &        Conv-small &           Strided \\
        \midrule
        0.000  &  $82.89 \pm 0.09$ &  $80.94 \pm 0.27$ &  $80.45 \pm 0.67$ \\
        0.001  &  $82.42 \pm 0.11$ &  $80.51 \pm 0.19$ &  $80.08 \pm 0.60$ \\
        0.002  &  $82.00 \pm 0.08$ &  $80.09 \pm 0.20$ &  $79.70 \pm 0.58$ \\
        0.003  &  $81.56 \pm 0.13$ &  $79.68 \pm 0.21$ &  $79.31 \pm 0.66$ \\
        0.010  &  $78.37 \pm 0.03$ &  $76.80 \pm 0.17$ &  $76.64 \pm 0.30$ \\
        0.100  &  $45.56 \pm 0.54$ &  $45.16 \pm 0.41$ &  $46.02 \pm 0.30$ \\
        0.300  &  $22.64 \pm 0.32$ &  $21.12 \pm 0.68$ &  $19.51 \pm 1.61$ \\
        0.500  &  $18.00 \pm 0.42$ &  $15.05 \pm 0.65$ &  $12.39 \pm 1.96$ \\
        1.000  &  $12.48 \pm 1.08$ &  $10.29 \pm 0.96$ &   $8.44 \pm 0.52$ \\
        1.200  &  $10.97 \pm 0.90$ &   $9.50 \pm 0.90$ &   $8.28 \pm 0.21$ \\
        1.400  &  $10.10 \pm 0.81$ &   $9.15 \pm 0.75$ &   $8.22 \pm 0.46$ \\
        \bottomrule
        \end{tabular}
\end{table}

\begin{table}[htpb]
    \centering
    \caption{Detailed results on \cifar\ with \resnet.}
    \label{tab:cifar-resnet}
    \begin{tabular}{c|cccc}
        \toprule
        \( \epsilon \) &           Maxpool &        Conv-Large &        Conv-Small &           Strided \\
        \midrule
        0.000 &  $94.49 \pm 0.20$ &  $93.37 \pm 0.14$ &  $87.46 \pm 0.99$ &  $89.22 \pm 0.09$ \\
        0.001 &  $94.03 \pm 0.26$ &  $92.95 \pm 0.16$ &  $87.01 \pm 1.05$ &  $88.79 \pm 0.14$ \\
        0.002 &  $93.67 \pm 0.15$ &  $92.58 \pm 0.08$ &  $86.51 \pm 1.10$ &  $88.27 \pm 0.15$ \\
        0.003 &  $93.18 \pm 0.11$ &  $92.14 \pm 0.04$ &  $85.95 \pm 1.07$ &  $87.83 \pm 0.18$ \\
        0.005 &  $92.32 \pm 0.25$ &  $91.27 \pm 0.08$ &  $84.81 \pm 1.15$ &  $86.72 \pm 0.26$ \\
        0.007 &  $91.46 \pm 0.24$ &  $90.22 \pm 0.12$ &  $83.68 \pm 1.18$ &  $85.62 \pm 0.25$ \\
        0.010 &  $89.99 \pm 0.28$ &  $88.72 \pm 0.09$ &  $82.00 \pm 1.13$ &  $83.89 \pm 0.29$ \\
        0.100 &  $62.00 \pm 0.27$ &  $62.66 \pm 0.40$ &  $49.58 \pm 3.48$ &  $54.93 \pm 0.38$ \\
        0.200 &  $53.24 \pm 0.64$ &  $54.24 \pm 0.38$ &  $39.39 \pm 4.68$ &  $45.05 \pm 0.42$ \\
        0.300 &  $48.00 \pm 1.05$ &  $48.43 \pm 0.99$ &  $33.79 \pm 4.35$ &  $38.89 \pm 0.59$ \\
        0.400 &  $43.13 \pm 0.96$ &  $42.87 \pm 1.68$ &  $29.71 \pm 4.56$ &  $34.27 \pm 0.17$ \\
        0.500 &  $38.03 \pm 1.30$ &  $36.97 \pm 1.80$ &  $26.06 \pm 3.60$ &  $29.49 \pm 0.31$ \\
        1.000 &  $16.06 \pm 1.20$ &  $15.96 \pm 1.56$ &  $13.61 \pm 1.22$ &  $14.24 \pm 0.68$ \\
        1.200 &  $11.94 \pm 0.93$ &  $11.88 \pm 0.53$ &  $11.20 \pm 0.95$ &  $11.53 \pm 0.59$ \\
        1.400 &  $10.02 \pm 0.69$ &   $9.93 \pm 0.34$ &   $9.81 \pm 0.82$ &   $9.90 \pm 0.83$ \\
        1.500 &   $9.76 \pm 0.61$ &   $9.75 \pm 0.22$ &   $9.46 \pm 0.77$ &   $9.67 \pm 0.77$ \\
        \bottomrule
        \end{tabular}
\end{table}



%
%
\end{document}
